\documentclass{article}

\PassOptionsToPackage{numbers,compress}{natbib}
\usepackage[preprint]{neurips_2026}

\usepackage[utf8]{inputenc} % allow utf-8 input
\usepackage[T1]{fontenc}    % use 8-bit T1 fonts
\usepackage{hyperref}       % hyperlinks
\usepackage{url}            % simple URL typesetting
\usepackage{booktabs}       % professional-quality tables
\usepackage{amsfonts}       % blackboard math symbols
\usepackage{nicefrac}       % compact symbols for 1/2, etc.
\usepackage{microtype}      % microtypography
\usepackage{xcolor}         % colors
\usepackage{amsmath,amsfonts}
\usepackage{algorithmic}
\usepackage{algorithm}
\usepackage{array}
\usepackage{textcomp}
\usepackage{stfloats}
\usepackage{verbatim}
\usepackage{cite}
\usepackage{amsthm}
\usepackage{multirow}
\usepackage{threeparttable}
\usepackage[most]{tcolorbox}
\usepackage{graphicx}
\usepackage{subcaption}
\usepackage{amssymb}

\newtheorem{proposition}{Proposition}

\title{MLGIB: Multi-Label Graph Information Bottleneck for Expressive and Robust Message Passing}

\author{
  Chaokai Wu, Haofu Shi, Ningxuan Ma, Jianghong Ma, Xiaofeng Zhang 
}

\begin{document}

\maketitle

\begin{abstract}
%Over-squashing refers to the information loss when exponentially growing structural information is compressed into fixed-dimensional representations. While this phenomenon has been widely studied as a structural bottleneck, we observe that its effects are not fully explained in multi-label graphs, where nodes are associated with partially overlapping and inconsistent label signals. In this setting, information degradation is also driven by difficulties in discriminating informative signals under different labels. To address this limitation, we reinterpret message passing as a constrained information transmission problem from an information-theoretic perspective. Based on this formulation, we propose the Multi-Label Graph Information Bottleneck (MLGIB), a principled framework that balances predictive sufficiency and representation compactness under label-induced noisy information. We derive tractable variational bounds over a Markovian dependence space, enabling principled control over redundant information while preserving task-relevant signals. Finally, we instantiate MLGIB into a differentiable message-passing architecture for general multi-label graphs. Extensive experiments on multiple benchmarks demonstrate consistent improvements over existing methods, validating the effectiveness and generality of the proposed framework.

Graph Neural Networks (GNNs) suffer from over-squashing in deep message passing, where information from exponentially growing neighborhoods is compressed into fixed-dimensional representations. We show that this issue becomes a distinct failure mode in multi-label graphs: neighboring nodes often share only limited labels while differing across many irrelevant ones, causing predictive signals to be diluted by noisy label information. To address this challenge, we propose the \textit{Multi-Label Graph Information Bottleneck} (MLGIB), which formulates multi-label message passing as constrained information transmission under irrelevant label noise. MLGIB balances expressiveness and robustness by preserving predictive label signals while suppressing irrelevant noise. Specifically, it constructs a Markovian dependence space and derives tractable variational bounds, where the lower bound maximizes mutual information with target labels and the upper bound constrains redundant source information. These bounds lead to an end-to-end label-aware message-passing architecture. Extensive experiments on multiple benchmarks demonstrate consistent improvements over existing methods, validating the effectiveness and generality of the proposed framework.
\end{abstract}
\section{Introduction}

Graph Neural Networks (GNNs) are widely used for modeling structured data \citep{DBLP:conf/aaai/AziziKHB26,DBLP:conf/aaai/HevapathigeWZ26,DBLP:conf/aaai/LiuY26,DBLP:conf/iclr/BergnaCOLH25,DBLP:conf/iclr/LiG0025}, but deeper message passing often suffers from over-squashing, where information from exponentially growing neighborhoods is compressed into fixed-dimensional embeddings, causing the loss of long-range dependencies \citep{DBLP:conf/iclr/0002Y21,DBLP:conf/iclr/ToppingGC0B22}. While this issue has been studied mainly in homophilous graphs \citep{DBLP:conf/nips/JamadandiRB24,DBLP:conf/nips/ZhuYZHAK20}, we show that it induces a distinct failure mode in multi-label graphs.

Unlike conventional homophilous graphs \citep{wo2025local,loveland2025unveiling,jiang2026does} under the single-label setting, where neighboring nodes are typically assumed to share the same class label, multi-label graphs \citep{yu2014large,bhatia2015sparse,bei2025correlation, wu2025graph,sun2025multi} present a more intricate propagation scenario. As illustrated in Fig. \ref{fig:intro}(a), nodes in multi-label graphs typically share only a few labels while differing across many irrelevant ones. Additional statistics in Appendix Fig. \ref{fig:statistic} further support this observation. Thus, message passing \citep{chen2025beyond,walkega2025expressive} expands the receptive field but also introduces increasingly noisy, partially relevant label signals, making aggregation ambiguous and effectively turning message passing into a noisy channel \citep{DBLP:conf/iclr/0002Y21,DBLP:conf/icml/LiangBSXPS25,DBLP:conf/cikm/SaberS25}. As shown in Fig. \ref{fig:intro}(b), after $l$ layers of message passing, a target node $v$ is inundated with mixed signals from its multi-hop neighborhood, where a small fraction of label information may be relevant while a large portion corresponds to irrelevant labels. Such noisy accumulation can quickly saturate node representations before meaningful long-range dependencies are effectively integrated. This indicates that over-squashing is not merely caused by structural compression, but also by the limited ability of GNNs to extract \textit{informative signals from partially overlapping label information}.

This observation exposes two limitations of existing methods, as shown in Fig. \ref{fig:intro}(c). 
(1) Existing multi-label graph learning methods \citep{zhou2021multi,xiao2022semantic,gao2019semi} model label correlations, but \textit{lack sufficient expressiveness to preserve informative label signals} 
when they are entangled with irrelevant ones during multi-label propagation,
%under noisy multi-label propagation, 
causing predictive information to be diluted by irrelevant labels. 
(2) Current over-squashing mitigation methods, such as graph rewiring \citep{karhadkar2023fosr,nguyen2023revisiting,DBLP:conf/iclr/ToppingGC0B22}, improve structural connectivity but \textit{lack robustness to filter irrelevant messages}, thereby amplifying noise in multi-label graphs. 
Together, these limitations suggest that effective multi-label message passing requires both expressiveness to preserve predictive label signals and robustness to suppress irrelevant noise.

To address the above limitations, we formulate multi-label message passing as an information-theoretic problem of transmitting predictive label signals under irrelevant noise. As shown in Fig. \ref{fig:intro}(d), we propose the Multi-Label Graph Information Bottleneck (MLGIB), which balances two complementary objectives: \textit{expressiveness}, which preserves predictive label information during propagation, and \textit{robustness}, which suppresses noise from irrelevant labels. To realize this principle, MLGIB constructs a Markovian dependence space tailored to multi-label graphs and optimizes it via tractable variational bounds. The upper bound limits redundant information induced by irrelevant labels, while the lower bound maximizes the mutual information of relevant predictive label signals. These bounds guide the design of an end-to-end differentiable message-passing architecture.

 \vspace{0mm}

\begin{itemize}
    \item To the best of our knowledge, this is the \textit{first} work to extend the Information Bottleneck principle to multi-label graph learning, revealing that partial label overlap exacerbates over-squashing during message passing and motivates principled information filtering.

    \item We propose the Multi-Label Graph Information Bottleneck (MLGIB), a principled framework that formulates multi-label message passing as a constrained information transmission problem, enhancing the expressiveness and robustness in message passing.
    %By introducing a Markovian dependence space, we derive tractable variational bounds that balance predictive sufficiency and representation compactness while explicitly suppressing irrelevant label interference.

    \item We instantiate MLGIB into a practical message-passing architecture tailored to multi-label graphs, incorporating label-aware information filtering into the propagation process.

    \item We conduct extensive experiments on multiple benchmark datasets, demonstrating consistent improvements over existing methods across different evaluation settings.
\end{itemize}
\begin{figure}
    \centering
    \includegraphics[width=1\linewidth]{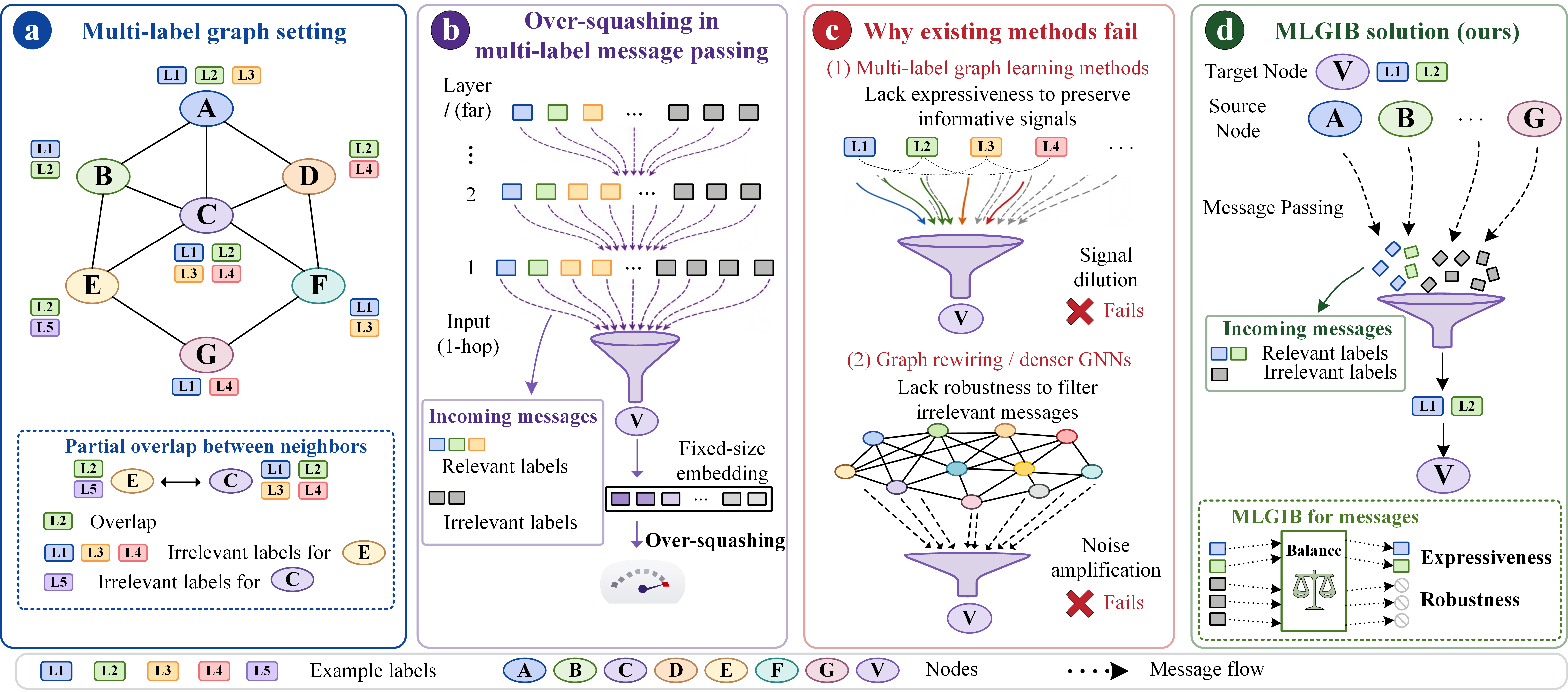}
    \caption{(a) The partial overlap problem of labels in multi‑label graphs; (b) The manifestation of over‑squashing in multi‑label graphs, where exponentially growing label information is compressed into a fixed‑dimensional vector; (c) Existing methods either struggle to extract truly informative label signals or introduce additional noise; (d) MLGIB can extract useful label information while reducing interference from irrelevant noise, exhibiting excellent expressiveness and robustness.}
    \label{fig:intro} \vspace{-4mm}
\end{figure}
\section{Preliminaries and Notations}
% Consider a multi-label graph denoted by $G=(V,E,X)$, where $V$ represents the set of nodes and $E$ represents the set of edges. Let $A$ denote the adjacency matrix of $G$, where $A_{ij}=1$ if $(i,j) \in E$, and let $D$ be the degree matrix. 

\subsection{Information Bottleneck in Deep Networks}
%Given the input data $X$ and intermediate encoding $Z_X$, the goal of deep networks is to learn an encoding that is maximally informative about target $Y$. $Z_X$ is defined by a parametric encoder $p(z|x)$, where $x$ and $z$ are instances of $X$ and $Z_X$. This is measured by the mutual information between encoding and target:
Given input data $X$ and target variable $Y$, deep networks aim to learn an intermediate representation $Z_X$ that is maximally informative for predicting $Y$. 
$Z_X$ is generated by a parametric encoder $p(z|x)$, where $x$ and $z$ denote instances of $X$ and $Z_X$, respectively. 
The predictive information in $Z_X$ can be measured by the mutual information between $Z_X$ and $Y$:
\begin{equation}
\resizebox{0.4\linewidth}{!}{$        \displaystyle
    I(Z_X;Y)=\int \mathrm{d} z \, \mathrm{d} y \, p(z,y) \log \frac{p(z,y)}{p(z)p(y)},
    $}
\end{equation}
where $y$ is an instance of $Y$. %If the sole objective is to maximize mutual information, setting $Z_X=X$ would suffice. However, such an approach does not yield a meaningful representation. The Information Bottleneck (IB) \citep{tishby2000information,tishby2015deep,saxe2019information} principle addresses this limitation by imposing $I(Z_X;X)\le I_c$ on the mutual information between $Z_X$ and $X$, thereby refining $Z_X$ through purification, where $I_c$ is the information constraint. This objective function is detailed as follows:
However, directly maximizing $I(Z_X;Y)$ may lead to a trivial solution, e.g., setting $Z_X=X$, which preserves all input information but does not yield a compact representation. 
The Information Bottleneck (IB) principle \citep{tishby2000information,tishby2015deep,saxe2019information} addresses this by constraining the information $Z_X$ retains about the input $X$. 
Specifically, it imposes the constraint $I(Z_X;X)\le I_c$, where $I_c$ denotes the information budget, leading to the constrained optimization problem:
\begin{equation}
    \max I(Z_X;Y)\; s.t.\, I(X;Z_X)\le I_c.
\end{equation}
%And then we can optimize the objective function by a Lagrange multiplier $\beta$,
By introducing a Lagrange multiplier $\beta$, the objective can be equivalently written as
\begin{equation}
    \min\, \text{IB}(Z_X,Y,X)\triangleq -I(Z_X;Y)+\beta I(X;Z_X).
\end{equation}
This objective encourages $Z_X$ to preserve task-relevant information about $Y$ while discarding redundant or irrelevant information from $X$.

\subsection{Graph Information Bottleneck}
The Graph Information Bottleneck (GIB) \citep{wu2020graph} principle extends IB to graph-structured data. 
Given graph data $\mathcal{D}$ and prediction target $Y$, GIB aims to learn node representations $Z_X$ that maximize predictive information about $Y$ while minimizing redundant information from the original graph $\mathcal{D}$. 
To approximate the optimal representation, 
% GIB relies on a local-dependence assumption: for a node $v$ within a certain hop range, information outside its local neighborhood is conditionally independent of $v$. Based on this assumption, 
GIB defines a Markovian dependence space $\Omega$ over the message-passing process. 
% At each layer $l$, node representations $Z_X^{(l)}$ are refined under a learned graph structure $Z_A^{(l)}$, where $\{Z_A^{(l)}\}_{1\le l\le L}$ can be adjusted from the original graph structure $\mathcal{D}$. 
The GIB objective is formulated as
%The graph information bottleneck (GIB) principle requires the node representation $Z_X$ to minimize the information from the graph-structured data $\mathcal{D}$ and maximize the information to prediction $Y$. To approximate the optimal $Z_X$, GIB relies on the local-dependence assumption: Given the node $v$ within a certain number of hops, the data in the rest of the graph will be independent of $v$. Based on this assumption, GIB defines the Markovian dependence space $\Omega$: The representation of each node will be refined by a graph structure $Z_A^{(l)}$, and $\{Z_A^{(l)}\}_{1\le l \le L}$ is changed by adjusting the graph structure $\mathcal{D}$. The objective function is
\begin{equation}
    \min_{\mathbb{P}(Z_X^{(L)}|\mathcal{D})\in \Omega} \text{GIB}(\mathcal{D},Y;Z_X^{(L)})\triangleq \left[-I(Y;Z^{(L)}_X)+\beta I(\mathcal{D};Z^{(L)}_X)
    \right].
\end{equation}
In this formulation, GIB purifies node representations by optimizing the mutual information
$I(Y;Z^{(L)}_X)$ and 
$I(\mathcal{D};Z^{(L)}_X)$, making $Z_X^{(L)}$ more predictive of $Y$ while being less dependent on redundant graph information. 
However, under multi-label settings, the central challenge is no longer merely to purify node representations from graph-structured redundancy. 
Instead, it is to selectively preserve informative label signals from source nodes while suppressing irrelevant label noise during message propagation. 
This motivates Multi-Label Graph Information Bottleneck, which models multi-label message passing as a constrained information transmission problem.

%In this formulation, GIB purifies the node representation by optimizing the $\mathbb{P}(Z^{(l)}_X|Z^{(l-1)}_X, Z^{(l)}_A)$ and $\mathbb{P}(Z^{(l)}_A|Z^{(l-1)}_X, A)$, ensuring the node representation $Z^{(L)}_X$ is more aligned with the prediction $Y$. Nevertheless, in the context of multi-label settings, our focus shifts to the purification of multi-label information from source nodes—this represents a fundamental difference between GIB and our proposed Multi-Label Graph Information Bottleneck.
\section{Multi-Label Graph Information Bottleneck}
\subsection{Deriving the Multi-Label Graph Information Bottleneck}
%We introduce a Multi-Label Graph Information Bottleneck (MLGIB) to minimize the message $Z_H$ from the source node data $\mathcal{D}_s$, where $\mathcal{D}_s=(A,X)$, and maximize the information to targeted node label $Y_{t}$. To optimal the principle, we adopt the local dependence assumption from GIB \citep{wu2020graph}, which posits that given the data associated with neighbors within a certain number of hops from a node $v$, the data in the remainder of the graph is independent of $v$.  Based on this assumption, we construct a solution space with Markovian dependencies, as follows:
We introduce the Multi-Label Graph Information Bottleneck (MLGIB) to formulate multi-label message passing as a constrained information transmission problem. 
Given source-node data $\mathcal{D}_s=(A,X)$, where $A$ denotes the graph structure from the source, MLGIB aims to learn messages $Z_H$ that preserve predictive information about the target-node labels $Y_t$ while suppressing redundant information from $\mathcal{D}_s$. 
To optimize this principle, we follow the local-dependence assumption in GIB \citep{wu2020graph}, which states that, conditioned on nodes within a $l$-hop neighborhood of $v$, the remaining graph data is independent of $v$. Based on this assumption, we construct a Markovian dependence space $\mathcal{M}$ for multi-label message passing:
\begin{equation}
\mathcal{M}=
    \begin{cases}
        A,Z_X^{(l)}\rightarrow Z_A^{(l)}\\
         Z_A^{(l)}, Z_X^{(l)}\rightarrow Z_H^{(l)}\\
          Z_H^{(l)},Z_A^{(l)}\rightarrow Z_X^{(l+1)}
    \end{cases},
\end{equation}
where $Z_X^{(0)}=X$, and $0\leq l\leq L$ denotes the message-passing layer. This dependence space characterizes the generation process of $\mathbb{P}(Z_X^{(l+1)}|\mathcal{D}_s)$. 
Starting from the source-node data $\mathcal{D}_s$, each layer first refines the message-passing structure $Z_A^{(l)}$, then extracts  messages $Z_H^{(l)}$, and finally updates the node representation $Z_X^{(l+1)}$. 
The key distinction of MLGIB lies in optimizing $\mathbb{P}(Z_H^{(l)}|\mathcal{D}_s)$, which can be decomposed into 
$\mathbb{P}(Z_A^{(l)}|\mathcal{D}_s)$ and 
$\mathbb{P}(Z_H^{(l)}|Z_A^{(l)},Z_X^{(l)})$. 
The former controls the message-passing paths, while the latter regulates message purity by selecting informative label signals and suppressing irrelevant noise. Following the intuition in Fig. \ref{fig:intro}, we define the MLGIB objective as
%where $Z_X^0=X$, and $0<l\leq L$ denotes the number of iterations. This dependency illustrates the modeling process of $\mathbb{P}(Z^{(l+1)}_X|D_s)$: starting from the raw data $D_s$, each iteration updates the message-passing paths $Z^{(l)}_A$, then filters out information $Z^{(l)}_H$ relevant to the labels, and updates the node features $Z^{(l+1)}_X$. The key innovation of MLGIB is optimizing $\mathbb{P}(Z^{(l)}_H|D_s)$, which consists of $\mathbb{P}(Z^{(l)}_A|D_s)$ and $\mathbb{P}(Z^{(l)}_H|Z^{(l)}_A, Z^{(l)}_X)$, responsible respectively for refining the message-passing weights and message purity. Finally, we design the objective function following the idea presented in Fig. \ref{fig:intro}:
\begin{equation}
    \min_{\mathbb{P}(Z^{(L)}_H|\mathcal{D}_s)\in \mathcal{M}}\textbf{MLGIB}(\mathcal{D}_s,Y_{t};Z^{(L)}_H) \triangleq \left[-I(Y_{t};Z^{(L)}_H)+\beta I(\mathcal{D}_s;Z^{(L)}_H)
    \right].
    \label{eq:objective_function}
\end{equation}
The first term encourages \textit{expressiveness} by maximizing the predictive information between the learned messages and target-node labels, while the second term promotes \textit{robustness} by limiting redundant information inherited from the source-node data.

% In this formulation, we need to optimize the distributions $\mathbb{P}(Z^{(l)}_H|Z^{(l)}_A,Z^{(l)}_X)$ and $\mathbb{P}(Z^{(l)}_A|A,Z^{(l)}_X)$, $l\in [1,L]$.
%In this formulation, the computation of $I(Y_{t};Z^{(L)}_H)$ and $I(\mathcal{D}_s;Z^{(L)}_H)$ is intractable. We introduce the variational methods, which are frequently used in traditional IB principle \citep{alemi2017deep} and GIB principle \citep{wu2020graph}, to optimize these two terms. A lower bound of $I(Y_{t};Z^{(L)}_H)$ and an upper bound of $I(\mathcal{D}_s;Z^{(L)}_H)$ are provided in Proposition \ref{proposition:lower_bound} and Proposition \ref{proposition:upper_bound}. More derivation details are presented in Appendix \ref{proof_supplement}.
However, directly computing $I(Y_t;Z_H^{(L)})$ and $I(\mathcal{D}_s;Z_H^{(L)})$ is generally intractable. 
Therefore, following variational techniques widely used in the IB principle \citep{alemi2017deep} and GIB \citep{wu2020graph}, we derive a tractable lower bound for $I(Y_t;Z_H^{(L)})$ and an upper bound for $I(\mathcal{D}_s;Z_H^{(L)})$. 
These bounds are given in Proposition \ref{proposition:lower_bound} and Proposition \ref{proposition:upper_bound}, respectively, with detailed derivations provided in Appendix \ref{proof_supplement}.
\begin{proposition}
   % For any distribution $\mathbb{Q}_1\bigl(Y_t \mid Z_H^{(L)}\bigr)$ and $\mathbb{Q}_2(Y_t)$, the lower bound of $I(Y_{t};Z^{(L)}_H)$ is
       For any variational distributions $\mathbb{Q}_1\bigl(Y_t \mid Z_H^{(L)}\bigr)$ and $\mathbb{Q}_2(Y_t)$, the mutual information $I(Y_t;Z_H^{(L)})$ admits the following lower bound:
    \begin{equation}
    % \resizebox{0.6\linewidth}{!}{$        \displaystyle
        I\bigl(Y_t; Z_H^{(L)}\bigr) \geq 1 + \mathbb{E}\left[ \log\frac{\mathbb{Q}_1\bigl(Y_t \mid Z_H^{(L)}\bigr)}{\mathbb{Q}_2(Y_t)} \right] - \mathbb{E}_{\mathbb{P}(Y_t)\mathbb{P}(Z_H^{(L)})}\left[ \frac{\mathbb{Q}_1\bigl(Y_t \mid Z_H^{(L)}\bigr)}{\mathbb{Q}_2(Y_t)} \right].
        \label{eq:lower_bound}
        % $}
    \end{equation}
    \label{proposition:lower_bound}
\end{proposition}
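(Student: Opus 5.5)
This is the NWJ/“$f$-divergence” style bound obtained from the elementary inequality $\log u \le u - 1$ for $u>0$. We start from the definition of mutual information as an expectation under the joint $\mathbb{P}(Y_t,Z_H^{(L)})$:
\[
I\bigl(Y_t;Z_H^{(L)}\bigr)=\mathbb{E}\left[\log\frac{\mathbb{P}(Y_t\mid Z_H^{(L)})}{\mathbb{P}(Y_t)}\right].
\]
We then insert the variational ratio $r(y,z)=\mathbb{Q}_1(y\mid z)/\mathbb{Q}_2(y)$ and split the integrand as
\[
\log\frac{\mathbb{P}(y\mid z)}{\mathbb{P}(y)}=\log r(y,z)+\log\frac{\mathbb{P}(y\mid z)}{\mathbb{P}(y)\,r(y,z)}.
\]
The first piece already produces the term $\mathbb{E}\bigl[\log \mathbb{Q}_1(Y_t\mid Z_H^{(L)})/\mathbb{Q}_2(Y_t)\bigr]$ in \eqref{eq:lower_bound}. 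It remains to lower-bound the expectation of the second piece under the joint.

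For this we write the second piece as $-\log u$ with
\[
u=\frac{\mathbb{P}(y)\,r(y,z)}{\mathbb{P}(y\mid z)}=\frac{\mathbb{P}(y)\mathbb{P}(z)\,r(y,z)}{\mathbb{P}(y,z)}.
\]
Since $-\log u\ge 1-u$, taking the expectation under $\mathbb{P}(y,z)$ gives
\[
\mathbb{E}_{\mathbb{P}(y,z)}[-\log u]\ \ge\ 1-\int \mathbb{P}(y)\mathbb{P}(z)\,r(y,z)\,\mathrm{d}y\,\mathrm{d}z=1-\mathbb{E}_{\mathbb{P}(Y_t)\mathbb{P}(Z_H^{(L)})}\bigl[r\bigr].
\]
Here the joint density cancels against the denominator of $u$, which turns the expectation into one under the product of marginals. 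Adding the two pieces yields exactly the stated bound.

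An equivalent route, and the one we would mention as a cross-check, is the Donsker--Varadhan/Fenchel dual of $\mathrm{KL}(\mathbb{P}(Y_t,Z_H^{(L)})\,\|\,\mathbb{P}(Y_t)\mathbb{P}(Z_H^{(L)}))$, namely $\mathrm{KL}(P\|Q)\ge \mathbb{E}_P[T]-\mathbb{E}_Q[e^{T-1}]$, evaluated with the critic $T=1+\log r$.

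The main obstacle is technical rather than conceptual. The cancellation step requires absolute continuity, i.e. the joint must be supported where the product of marginals is. We also need $\mathbb{Q}_1,\mathbb{Q}_2>0$ on the relevant support so that $r$ and $\log r$ are well defined and the expectations are finite, or else the bound is trivially true as $-\infty$. We would state these mild regularity assumptions explicitly. We would also note that the Markov structure $\mathcal{M}$ plays no role here: the bound holds for any pair of random variables. Finally, the inequality is tight when $r=\mathbb{P}(y\mid z)/\mathbb{P}(y)$, which is attained by taking $\mathbb{Q}_1=\mathbb{P}(Y_t\mid Z_H^{(L)})$ and $\mathbb{Q}_2=\mathbb{P}(Y_t)$.
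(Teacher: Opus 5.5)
Your proof is correct. Your ``cross-check'' is exactly the paper's proof, which cites the Nguyen--Wainwright--Jordan representation $\mathrm{KL}(p\|q)=\sup_g(\mathbb{E}_p[g]-\mathbb{E}_q[e^{g-1}])$ and substitutes $g=1+\log(\mathbb{Q}_1/\mathbb{Q}_2)$.

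Your main route instead proves inline, for this one critic, the weak-duality half of that representation. Set $\rho=\mathbb{P}(y,z)/(\mathbb{P}(y)\mathbb{P}(z))$ and $T=1+\log r$. The pointwise Fenchel--Young inequality $T\rho\le\rho\log\rho+e^{T-1}$ (for the conjugate pair $t\log t$ and $e^{s-1}$), divided by $\rho$, is precisely your $\log u\le u-1$ with $u=r/\rho$. Integrating it against the joint is your cancellation step.

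The paper's version is shorter, and it places the bound in a family indexed by the critic, which is what motivates the particular choice of $g$. Yours is self-contained and shows that only the inequality direction is used, not the supremum identity the paper states. It also displays the equality case $u\equiv 1$, i.e.\ $\mathbb{Q}_1/\mathbb{Q}_2=\mathbb{P}(y\mid z)/\mathbb{P}(y)$, with no extra work.

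Two minor points. First, the bound $\mathbb{E}_P[T]-\mathbb{E}_Q[e^{T-1}]$ is the NWJ ($f$-divergence) dual, not Donsker--Varadhan. The Donsker--Varadhan bound $\mathbb{E}_P[T]-\log\mathbb{E}_Q[e^{T}]$ is tighter and implies NWJ via $\log x\le x/e$. Second, the cancellation needs less than you say. Integrating $1-u$ against the joint produces $\int_{\{\mathbb{P}(y,z)>0\}}\mathbb{P}(y)\mathbb{P}(z)\,r$, which is at most $\mathbb{E}_{\mathbb{P}(Y_t)\mathbb{P}(Z_H^{(L)})}[r]$ because $r\ge 0$. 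Absolute continuity of the joint with respect to the product is needed only to write $I$ as an expected log-ratio. When it fails, $I=+\infty$ and the bound is trivial.
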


\begin{proposition}
%For any $\mathbb{Q}\bigl(Z_H^{(l)}\bigr)$, $l\in S_H$, and $\mathbb{Q}\bigl(Z_A^{(l)}\bigr)$, $l\in S_A$, based on the Markovian dependencies of $\mathcal{M}$,
For any variational distributions $\mathbb{Q}\bigl(Z_H^{(l)}\bigr)$ with $l\in S_H$ and $\mathbb{Q}\bigl(Z_A^{(l)}\bigr)$ with $l\in S_A$, under the Markovian dependence space $\mathcal{M}$, the mutual information $I(\mathcal{D}_s;Z_H^{(L)})$ admits the following upper bound:
\begin{equation}
% \resizebox{0.7\linewidth}{!}{$        \displaystyle
\begin{aligned}
    I\bigl(\mathcal{D}_s; Z_H^{(L)}\bigr) &\leq I\Bigl(\mathcal{D}_s; \bigl\{Z_H^{(l)}\bigr\}_{l \in S_H} \cup \bigl\{Z_A^{(l)}\bigr\}_{l \in S_A}\Bigr) \leq \sum_{l \in S_A} \textnormal{AIB}^{(l)} + \sum_{l \in S_H} \textnormal{HIB}^{(l)},\\
     \textnormal{AIB}^{(l)} &=  \mathbb{E}\left[ \log\frac{\mathbb{P}\bigl(Z_A^{(l)} \mid A, Z_X^{(l)}\bigr)}{\mathbb{Q}\bigl(Z_A^{(l)}\bigr)} \right],
     \textnormal{HIB}^{(l)} = \mathbb{E}\left[ \log\frac{\mathbb{P}\bigl(Z_H^{(l)} \mid Z_A^{(l)}, Z_X^{(l)}\bigr)}{\mathbb{Q}\bigl(Z_H^{(l)}\bigr)} \right],
    \end{aligned}
    % $}
    \label{eq:upper_bound}
\end{equation}
where $S_H$ and $S_A$ satisfy: (1) $S_H \subseteq \{0,1,\ldots,L\}$ and $S_H \neq \emptyset$; (2) $S_A = \{ \max(S_H), \ldots, L \}$.  
    \label{proposition:upper_bound}
\end{proposition}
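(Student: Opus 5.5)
The plan mirrors the GIB upper-bound argument of \citep{wu2020graph}, transplanted to the dependence space $\mathcal{M}$. Write $\mathcal{Z}=\{Z_H^{(l)}\}_{l\in S_H}\cup\{Z_A^{(l)}\}_{l\in S_A}$. There are two inequalities to prove.

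The first inequality, $I(\mathcal{D}_s;Z_H^{(L)})\le I(\mathcal{D}_s;\mathcal{Z})$, I would get from the data processing inequality. This requires the Markov chain $\mathcal{D}_s\to\mathcal{Z}\to Z_H^{(L)}$. Let $l^*=\max(S_H)$. Under $\mathcal{M}$, everything after layer $l^*$ is produced from $Z_H^{(l^*)}$, $Z_A^{(l^*)}$ and the subsequent structures $Z_A^{(l)}$, $l>l^*$. Concretely:
\begin{itemize}
\item $Z_X^{(l^*+1)}$ is generated from $(Z_H^{(l^*)},Z_A^{(l^*)})$;
\item for each $l>l^*$, $Z_H^{(l)}$ is generated from $(Z_A^{(l)},Z_X^{(l)})$, and $Z_X^{(l+1)}$ from $(Z_H^{(l)},Z_A^{(l)})$.
\end{itemize}
Hence, conditioned on $\mathcal{Z}$, which contains $Z_H^{(l^*)}$ and all $Z_A^{(l)}$ with $l\ge l^*$ because $S_A=\{l^*,\dots,L\}$, the variable $Z_H^{(L)}$ depends only on internal randomness and is independent of $\mathcal{D}_s$. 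One subtlety: $Z_A^{(l)}$ for $l>l^*$ itself depends on $A\in\mathcal{D}_s$. That is exactly why those structures are included in $\mathcal{Z}$, so conditioning on them removes that path. I would state this carefully by induction on $l$ from $l^*$ to $L$, showing that $Z_X^{(l)}$ and $Z_H^{(l)}$ are conditionally independent of $\mathcal{D}_s$ given $\mathcal{Z}$. If $l^*=L$ the claim is trivial.

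For the second inequality, I would expand $I(\mathcal{D}_s;\mathcal{Z})$ by the chain rule, ordering the elements of $\mathcal{Z}$ by generation order (layer index, with $Z_A^{(l)}$ before $Z_H^{(l)}$). Each term has the form $I(\mathcal{D}_s;W\mid \text{earlier elements})$. By the Markov property of $\mathcal{M}$ together with the local-dependence assumption, each such term is at most
\[
\mathbb{E}\Bigl[\log\frac{\mathbb{P}(W\mid \mathrm{pa}(W))}{\mathbb{P}(W\mid\text{earlier})}\Bigr].
\]
Here $\mathrm{pa}(Z_A^{(l)})=(A,Z_X^{(l)})$ and $\mathrm{pa}(Z_H^{(l)})=(Z_A^{(l)},Z_X^{(l)})$. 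The bound holds because $I(\mathcal{D}_s;W\mid E)\le I(\mathcal{D}_s,\mathrm{pa}(W);W\mid E)=I(\mathrm{pa}(W);W\mid E)$, where the last equality uses that $W$ depends on $\mathcal{D}_s$ and $E$ only through its parents.

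Next I would replace the intractable denominator $\mathbb{P}(W\mid\text{earlier})$ by the variational marginal $\mathbb{Q}(W)$. This uses the standard nonnegativity of KL: $\mathbb{E}[\log \mathbb{P}(W\mid E)]\ge\mathbb{E}[\log\mathbb{Q}(W)]$. Summing over $W\in\mathcal{Z}$ gives exactly $\sum_{l\in S_A}\mathrm{AIB}^{(l)}+\sum_{l\in S_H}\mathrm{HIB}^{(l)}$.

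The main obstacle is the conditional-independence bookkeeping. In particular, I must justify that $I(\mathcal{D}_s;W\mid E)\le I(\mathrm{pa}(W);W\mid E)$ even though $\mathrm{pa}(W)$ contains $Z_X^{(l)}$, which is not in $\mathcal{Z}$. I would handle this with the generic inequality $I(\mathcal{D}_s;W\mid E)\le I(\mathcal{D}_s,Z_X^{(l)};W\mid E)$, valid since adding variables cannot decrease the mutual information with $W$. I would then use that $W\perp(\mathcal{D}_s,E)\mid \mathrm{pa}(W)$ under $\mathcal{M}$ to write it as $\mathbb{E}[\log \mathbb{P}(W\mid\mathrm{pa}(W))/\mathbb{P}(W\mid E)]$. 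This is an expectation of a log-ratio and not literally a mutual information, but that form is all the subsequent KL step needs.
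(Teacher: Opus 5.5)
\textbf{Verdict: genuine gap in the data-processing step.} Apart from that step, your argument is the paper's: chain rule over $\mathcal{Z}$ in generation order, the Markov property to trade $\mathcal{D}_s$ for $\mathrm{pa}(W)$, and a KL step to insert $\mathbb{Q}(W)$. That half is sound. Keeping the conditioning and comparing $\mathbb{P}(W\mid E)$ with $\mathbb{Q}(W)$ is as valid as the paper's route through the unconditional $I(W;\mathrm{pa}(W))$. Your ordering, with $Z_A^{(l)}$ before $Z_H^{(l)}$, is also the correct one; the paper's $H_H^{(l)}$ as written contains $Z_H^{(l)}$ itself.

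\textbf{Why the data-processing step fails.} You propose an induction showing that $Z_X^{(l)}$ and $Z_H^{(l)}$, for $l>l^*$, are conditionally independent of $\mathcal{D}_s$ given $\mathcal{Z}$. This claim is false. For $l>l^*$, $Z_A^{(l)}\in\mathcal{Z}$ is a common child of $A$ and of $Z_X^{(l)}\notin\mathcal{Z}$. Conditioning on such a collider does not cut the dependence on $A$; it opens the path $A\to Z_A^{(l)}\leftarrow Z_X^{(l)}\to Z_H^{(l)}\to\cdots\to Z_H^{(L)}$.

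\textbf{Counterexample.} Take $L=1$, $S_H=\{0\}$, $S_A=\{0,1\}$, and the following variables:
\begin{itemize}
\item $A$ is a uniform bit, and $X$, $Z_A^{(0)}$, $Z_H^{(0)}$ are constant;
\item $Z_X^{(1)}$ is an independent uniform bit;
\item $Z_A^{(1)}=A\oplus Z_X^{(1)}$ and $Z_H^{(1)}=Z_A^{(1)}\oplus Z_X^{(1)}$.
\end{itemize}
Each variable is generated from its parents in $\mathcal{M}$, yet $Z_H^{(1)}=A$. So $I(\mathcal{D}_s;Z_H^{(1)})=\log 2$, while $I(\mathcal{D}_s;\mathcal{Z})=I(A;A\oplus Z_X^{(1)})=0$. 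The base case of your induction already fails, since given $\mathcal{Z}$ we have $Z_X^{(1)}=A\oplus Z_A^{(1)}$. Making the $Z_X$-update deterministic, as in the instantiation, only rescues $\max(S_H)\ge L-1$. For $\max(S_H)\le L-2$, the sampled message $Z_H^{(l^*+1)}$ plays the role of $Z_X^{(1)}$ above.

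\textbf{Relation to the paper, and a repair.} The paper's proof simply asserts this Markov chain, so the first inequality of the statement is itself unjustified in this generality; you did not miss an idea from the paper. Your per-term machinery does establish the outer bound, as follows.
\begin{itemize}
\item Run the chain rule in generation order on $\mathcal{Z}^+=\mathcal{Z}\cup\{Z_X^{(l)},Z_H^{(l)}: l^*<l\le L\}$. This set contains $Z_H^{(L)}$, so $I(\mathcal{D}_s;Z_H^{(L)})\le I(\mathcal{D}_s;\mathcal{Z}^+)$ is immediate.
\item Each added $Z_X^{(l)}$ or $Z_H^{(l)}$ has all its parents in its prefix, so its term is zero.
\item The terms for $Z_A^{(l)}$, $l\in S_A$, and $Z_H^{(l)}$, $l\in S_H$, are bounded by $\textnormal{AIB}^{(l)}$ and $\textnormal{HIB}^{(l)}$ by your argument, which only needs the prefix to consist of non-descendants.
\end{itemize}
This gives $I(\mathcal{D}_s;Z_H^{(L)})\le\sum_{l\in S_A}\textnormal{AIB}^{(l)}+\sum_{l\in S_H}\textnormal{HIB}^{(l)}$. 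The middle quantity is $I(\mathcal{D}_s;\mathcal{Z}^+)$ rather than $I(\mathcal{D}_s;\mathcal{Z})$, unless extra conditions hold, for example $\max(S_H)=L$.
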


% Next, to approximate the objective function, we model $\mathbb{Q}_1\bigl(Y_t \mid Z_H^{(L)}\bigr)$,$\mathbb{P}\bigl(Z_A^{(l)} \mid A, Z_X^{(l)}\bigr)$, and $\mathbb{P}\bigl(Z_H^{(l)} \mid Z_A^{(l)}, Z_X^{(l)}\bigr)$ and set $\mathbb{Q}\bigl(Z_H^{(l)}\bigr)$ and $\mathbb{Q}\bigl(Z_A^{(l)}\bigr)$ appropriately.

\subsection{Instantiating the MLGIB Principle}
%The preceding part approximates our objective function via variational upper and lower bounds of Proposition \ref{proposition:lower_bound} and \ref{proposition:upper_bound}. In this section, we instantiate these bounds, endowing our theory with practical deployability and thereby completing our theoretical framework. The instantiation method is shown in Fig. \ref{fig:model}.
% The previous section derives variational lower and upper bounds for MLGIB in Proposition \ref{proposition:lower_bound} and \ref{proposition:upper_bound}. 
In this section, we instantiate Proposition \ref{proposition:lower_bound} \& \ref{proposition:upper_bound} as tractable learning objectives, yielding an end-to-end differentiable message-passing architecture.  
The overall instantiation is illustrated in Fig. \ref{fig:model}.

\subsubsection{Instantiation of Proposition \ref{proposition:lower_bound} (Lower Bound)}

%We proceed with further derivations of each term in Eq. \ref{eq:lower_bound}, including
We start by instantiating the lower bound in Eq. \ref{eq:lower_bound},
% According to Eq. \ref{eq:lower_bound}, the lower bound consists of two terms: $\mathbb{E}\left[ \log\frac{\mathbb{Q}_1\bigl(Y_t \mid Z_H^{(L)}\bigr)}{\mathbb{Q}_2(Y_t)} \right]$ and $\mathbb{E}_{\mathbb{P}(Y_t)\mathbb{P}(Z_H^{(L)})}\left[ \frac{\mathbb{Q}_1\bigl(Y_t \mid Z_H^{(L)}\bigr)}{\mathbb{Q}_2(Y_t)} \right]$,
where for the first term, we have
\begin{equation}
% \resizebox{0.5\linewidth}{!}{$        \displaystyle
    \mathbb{E}\left[ \log\frac{\mathbb{Q}_1\bigl(Y_t \mid Z_H^{(L)}\bigr)}{\mathbb{Q}_2(Y_t)} \right]= \mathbb{E} \left[ \log \mathbb{Q}_1 \left( Y_t \middle| Z_H^{(L)} \right) \right] - \mathbb{E} \left[ \log \mathbb{Q}_2(Y_t) \right].
    \label{eq:q1/q2}
    % $}
\end{equation}
We set $\mathbb{Q}_2(Y_t)$ as the empirical label distribution $\mathbb{P}(Y_t)$. 
Then, the second term in Eq. \ref{eq:q1/q2} becomes
%Let $\mathbb{Q}_2(Y_t)$ be the distribution of ground-truth labels, ie., $\mathbb{Q}_2(Y_t)=\mathbb{P}(Y_t)$. From the second term in Eq. \ref{eq:q1/q2}, we have
\begin{equation}
\resizebox{0.89\linewidth}{!}{$        \displaystyle
    \mathbb{E} \left[ \log \mathbb{Q}_2(Y_t) \right]=\int_Y \int_{Z_H^{(L)}} \mathbb{P} \left( Y_t, Z_H^{(L)} \right) \log \mathbb{P}(Y_t) \, \mathrm{d}Z_H^{(L)} \, \mathrm{d}Y = \int_Y \mathbb{P}(Y_t) \log \mathbb{P}(Y_t) \, \mathrm{d}Y = \text{Const}.
    $}
\end{equation}
Since this term depends only on the ground-truth label distribution, it is independent of model parameters and can be treated as a constant. 
Therefore,
%$\int_Y \mathbb{P}(Y_t) \log \mathbb{P}(Y_t) \, \mathrm{d}Y$, consist of the distribution of ground-truth labels, can be treated as a constant that requires no fitting, and we have
\begin{equation}
% \resizebox{0.5\linewidth}{!}{$        \displaystyle
    \mathbb{E}\left[ \log\frac{\mathbb{Q}_1\bigl(Y_t \mid Z_H^{(L)}\bigr)}{\mathbb{Q}_2(Y_t)} \right]= \mathbb{E} \left[ \log \mathbb{Q}_1 \left( Y_t \middle| Z_H^{(L)} \right) \right] - \text{Const}.
    % $}
\end{equation}
The second term of Eq. \ref{eq:lower_bound} is
\begin{equation}
\resizebox{0.7\linewidth}{!}{$        \displaystyle
\begin{aligned}
    \mathbb{E}_{\mathbb{P}(Y_t)\mathbb{P}(Z_H^{(L)})}\left[ \frac{\mathbb{Q}_1\bigl(Y_t \mid Z_H^{(L)}\bigr)}{\mathbb{Q}_2(Y_t)} \right]&
    = \int_{Z_H^{(L)}} \int_Y \frac{\mathbb{Q}_1\left(Y_t \middle| Z_H^{(L)}\right)}{\mathbb{P}(Y_t)} \mathbb{P}(Y_t) \mathbb{P}\left(Z_H^{(L)}\right) \, \mathrm{d}Y \, \mathrm{d}Z_H^{(L)} 
    % \\&= \int_{Z_H^{(L)}} \int_Y \mathbb{Q}_1\left(Y_t \middle| Z_H^{(L)}\right) \mathbb{P}\left(Z_H^{(L)}\right) \, \mathrm{d}Y \, \mathrm{d}Z_H^{(L)} 
    \\& =\int_{Z_H^{(L)}} \mathbb{P}\left(Z_H^{(L)}\right) \, \mathrm{d}Z_H^{(L)} = 1.
\end{aligned}
$}
\end{equation}
Therefore, the lower bound in Proposition \ref{proposition:lower_bound} can be simplified as
\begin{equation}
    I(Y_t;Z_H^{(L)})\ge \mathbb{E} \left[ \log \mathbb{Q}_1 \left( Y_t \middle| Z_H^{(L)} \right) \right] - \text{Const}.
\end{equation}
%Based on the Monte Carlo sampling method, the expectation $\mathbb{E} \left[ \log \mathbb{Q}_1 \left( Y_t \middle| Z_H^{(L)} \right) \right]$ can be reduced to a binary cross-entropy (BCE) loss:
% We parameterize $Q_{1}\!\left(Y_{t}\mid Z_{H}^{(L)}\right)$ as an independent Bernoulli distribution over labels, whose negative log-likelihood reduces to the binary cross-entropy loss. 
% In practice, using Monte Carlo sampling, we instantiate this term as a multi-label prediction objective:
We parameterize $Q_{1}(Y_t \mid Z_H^{(L)})$ as an independent Bernoulli distribution over labels, reducing to:
\begin{equation}
    -I(Y_t;Z_H^{(L)})\rightarrow \sum\nolimits_{v\in \mathcal{V}}\text{BCE}(y_{v}, \sigma(\mathop{\text{Aggfunc}}(z_{h,u\rightarrow v}^{(L)}))),
\end{equation}
where BCE is the binary cross-entropy loss, $\mathcal{V}$ is the node set, $y_v$ is the multi-label ground-truth of node $v$, and $z_{h,u\rightarrow v}^{(L)}$ is the message passed from node $u$ to node $v$ at layer $L$. 
The sigmoid function $\sigma$ independently estimates the probability of each label. The aggregation function is defined as%$\mathcal{V}$ is the node set from the entire graph, $y_{v}$ is the instance of $Y_t$ for node $v$, and $z_{h,u\rightarrow v}^{(L)}$ is the instance of $Z_H^{(L)}$ for message from node $u$ to node $v$. The sigmoid function independently computes the probability for each class in a multi-label setting, and Aggfunc($\cdot$) is the function of graph aggregation, which can simply represent
\begin{equation}
    \mathop{\text{Aggfunc}}(z_{h,u\rightarrow v}^{(L)})=\sum\nolimits_{u\in \alpha_v^{(L)}}Wz_{h,u\rightarrow v}^{(L)},
\end{equation}
where $\alpha_v^{(L)}$, instantiated from $Z_A^{(L)}$, denotes the sampled message-passing neighborhood of node $v$, and $W$ is a learnable matrix.
Therefore, maximizing the lower bound enhances the expressiveness of MLGIB by encouraging the propagated messages to preserve predictive label information.
%where $z_{x,v}^{(L)}$ is the yet-to-be-updated node representation, and after an MLP, it becomes equivalent to the node's sent message, ie., $z_{x,v}^{(L)}=z_{h,u\rightarrow v}^{(L)}$.  $\alpha_{v}^{(L)}$ is the instance of $Z_A^{(L)}$ and it represents the sample set for node $v$. $W$ is the learnable weight.
% Note that even though node $v$ integrates global graph information, information flow is modulated by the weight $\alpha_{u,v}^{(L)}$. If a weight becomes zero, information from the corresponding node is entirely blocked.
\begin{figure*} 
    \centering
    \includegraphics[width=0.9\linewidth]{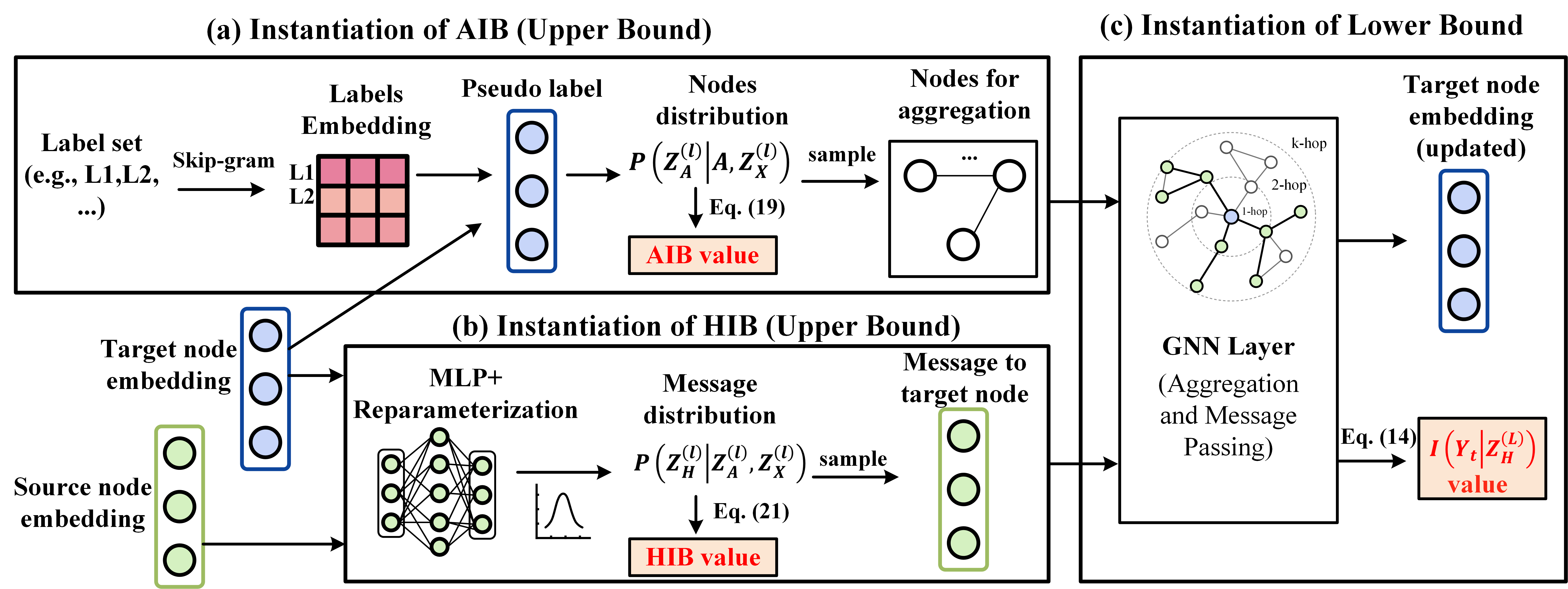}
    \caption{MLGIB is instantiated as a label-aware message-passing pipeline. (a) The instantiation of AIB learns label embeddings, derives pseudo labels, and samples label-relevant neighbors to constrain redundant structural information; (b) The instantiation of HIB parameterizes and samples transmitted messages to improve message purity.  (c) The selected neighbors and purified messages are then aggregated in a GNN layer, instantiating the lower-bound objective and producing updated target-node embeddings that preserve predictive label signals.}
    \label{fig:model} \vspace{-3mm}
\end{figure*}
\subsubsection{Instantiation of Proposition \ref{proposition:upper_bound} (Upper Bound)}
The upper bound of $I(
\mathcal{D}_s;Z_H^{(L)})$ consists of two terms: $\textnormal{AIB}^{(l)}$ for regulating message-passing paths and $\textnormal{HIB}^{(l)}$ for controlling message purity. %is decomposed into $\text{AIB}^{(l)}$ and $\text{HIB}^{(l)}$, where
\paragraph{Instantiation of AIB.}
The first term in Proposition \ref{proposition:upper_bound} is defined as
    $\text{AIB}^{(l)} = \mathbb{E}\left[ \log\frac{\mathbb{P}\bigl(Z_A^{(l)} \mid A, Z_X^{(l)}\bigr)}{\mathbb{Q}\bigl(Z_A^{(l)}\bigr)} \right]$,
%$Z_A^{(l)}$ is the sampled message paths. In multi‑label graphs, we choose to tie the message‑passing weights to label correlations. This implies that we need to sample more paths connecting label‑relevant nodes from the multi‑label graph, so that the probability $\mathbb{P}\bigl(Z_A^{(l)} \mid A, Z_X^{(l)}\bigr)$ is higher when labels are correlated. To achieve this, we start to model $Z_A^{(l)}$ by learning a group of label embeddings.
%The label embeddings are calculated by a simple skip-gram algorithm \citep{mikolov2013distributed}, which uses word frequency to learn the representation. Specifically, we sample a set of labels $L_\mathcal{V}$ from the training set data. For any label set $L_v$ of a node $v$, we randomly select a label $l_t$ as the target word, and select other labels $l_c$ as the positive context words, while a label $l_n$ is considered a negative sample if it is absent from the label set of any node that contains label  $l_t$. The loss of skip-gram is 
where $Z_A^{(l)}$ denotes the sampled message-passing paths. 
In multi-label graphs, message-passing paths should favor label-relevant neighbors. 
Therefore, we parameterize $\mathbb{P}\bigl(Z_A^{(l)} \mid A,Z_X^{(l)}\bigr)$ using label correlations, so that paths connecting label-relevant nodes are assigned higher probabilities.

% To capture label correlations, we first learn label embeddings with a skip-gram objective \citep{mikolov2013distributed}. 
% Given the label set $L_v$ of node $v$, we randomly select a target label $l_t$, treat the remaining labels $l_c\in L_v$ as positive contexts, and sample negative labels $l_n\notin L_v$. 
% The skip-gram loss is
% \begin{equation}
% \resizebox{0.89\linewidth}{!}{$        \displaystyle
%     \mathcal{L}_{\text{skip-gram}}=-\sum_{v\in \mathcal{V}}\sum_{l_c\in L_v}\log \sigma(\text{emb}(l_c)^\top \text{emb}(l_t))-\epsilon\sum_{v\in \mathcal{V}}\sum_{l_n\notin L_v}\log\sigma(-\text{emb}(l_n)^\top \text{emb}(l_t)),
%     \label{eq:skipgram}
%     $}
% \end{equation}

To capture label correlations, we learn label embeddings using a skip-gram objective \citep{mikolov2013distributed} in training data. 
Let $L_v$ be the label set of node $v$. For $l_t \in L_v$, we use $L_v \setminus \{l_t\}$ as positive contexts and draw negatives $l_n \in N(L_v)$, where $N(L_v)$ denotes $K$ labels sampled from outside $L_v$.
\begin{equation}
\resizebox{0.89\linewidth}{!}{$
\displaystyle
\mathcal{L}_{\text{skip-gram}} =
-\sum_{v\in \mathcal{V}}
\Bigg[
\sum_{l_c\in L_v \setminus \{l_t\}}
\log \sigma(\text{emb}(l_c)^\top \text{emb}(l_t))
+
\sum_{l_n \in N(L_v)}
\log \sigma(-\text{emb}(l_n)^\top \text{emb}(l_t))
\Bigg],
$}
\end{equation}
where $\text{emb}(\cdot)$ is the embedding function. This yields a dictionary $\mathcal{D}_L \in \mathbb{R}^{C\times d}$, containing embeddings for all labels, where $C$ is the number of labels and $d$ is the dimension. %A pseudo-label guided by label relevance is
For each node $v$, we construct a label-aware representation
\begin{equation}
    z_{p,v}^{(l)}=\text{MLP}(z_{x,v}^{(l)})^\top \mathcal{D}_L,\;\text{MLP}(z_{x,v}^{(l)})\in  \mathbb{R}^C,
    \label{eq:pseudo-label}
\end{equation}
%where $z_{x,v}^{(l)}$ is the instance of $Z_X^{(l)}$. This yields a message‑passing probability influenced by label relevance between node $v$ and node $u$:
where $z_{x,v}^{(l)}$ is the layer-$l$ node representation. 
The probability of selecting node $u$ as a message-passing neighbor of node $v$ is then defined as
\begin{equation}
% \resizebox{0.5\linewidth}{!}{$        \displaystyle
    \mathbb{P}\bigl(Z_A^{(l)}=(v,u) \mid A, Z_X^{(l)}\bigr)=\frac{\text{exp}(({z_{p,v}^{(l)})^\top z_{p,u}^{(l)}})}{\sum_{i\in \mathcal{N}_v}\text{exp}(({z_{p,v}^{(l)})^\top z_{p,i}^{(l)}})},
    \label{eq: message_passing_probability}
    % $}
\end{equation}
where $u\in \mathcal{N}_v$,  $\mathcal{N}_v$ is the 1-hop neighbor nodes of node $v$, and
% In this instantiation example, we simply let $\mathcal{N}_v$ be the set of 1-hop neighbors and 
$\left|\mathcal{N}_v\right|$ denotes the number of neighbors.
We further set $Q(Z_A^{(l)})$ to be a uniform distribution \citep{wu2020graph} for regularization, which yield
\begin{equation}
% \resizebox{0.6\linewidth}{!}{$        \displaystyle
    \text{AIB}^{(l)}\rightarrow \sum_{v\in \mathcal{V}}\sum_{u\in \mathcal{N}_v}\left(\text{log}\frac{\text{exp}(({z_{p,v}^{(l)})^\top z_{p,u}^{(l)}})}{\sum_{i\in \mathcal{N}_v}\text{exp}(({z_{p,v}^{(l)})^\top z_{p,i}^{(l)}})}-\text{log}\frac{1}{\left|\mathcal{N}_v\right|}\right).
    \label{eq:AIB_final}
    % $}
\end{equation}
%Next, we instantiate the remaining part of the upper bound on $I(\mathcal{D}_s,Z_H^{(L)})$, where
\paragraph{Instantiation of HIB.}
The second term in \ref{proposition:upper_bound} is defined as
    $\text{HIB}^{(l)} = \mathbb{E}\left[ \log\frac{\mathbb{P}\bigl(Z_H^{(l)} \mid Z_A^{(l)}, Z_X^{(l)}\bigr)}{\mathbb{Q}\bigl(Z_H^{(l)}\bigr)} \right]$.
Following the modeling strategy widely used in the Information Bottleneck literature \citep{alemi2017deep,wu2020graph,saxe2019information},
we start by modeling $\mathbb{Q}\bigl(Z_H^{(l)}\bigr)$ via a Gaussian mixture distribution for node $v$ and node $u$, ie., $\mathbb{Q}\bigl(Z_H^{(l)}\bigr)\rightarrow\sum\limits_{i=1}^mw_i\Phi(z_{h,v\rightarrow u}^{(l)};\mu_i;\sigma_i^2)$, where $w_i$, $\mu_i$ and $\sigma_i^2$ are learnable parameters, and $\Phi(\cdot)$ denotes a Gaussian density.
To enable stochastic optimization of $\mathbb{P}\bigl(Z_H^{(l)} \mid Z_A^{(l)}, Z_X^{(l)}\bigr)$, we model it as a Gaussian distribution parameterized by neural networks, and apply the reparameterization trick \citep{kingma2014auto} for sampling:
$z_{h,v\rightarrow u}^{(l)} = \mu_{v,u}^{(l)} + \sigma_{v,u}^{(l)} \cdot \epsilon,\; \epsilon \sim \mathcal{N}(0,I)$,
where
\begin{equation}
\mu_{v,u}^{(l)}=\text{MLP}_1(z_{x,v}^{(l)}\Vert z_{x,u}^{(l)}),\quad
\sigma_{v,u}^{2,(l)}=\text{MLP}_2(z_{x,v}^{(l)}\Vert z_{x,u}^{(l)}),
\end{equation}
and $\Vert$ denotes concatenation. By concatenating the two nodes and applying two separate MLPs, we obtain the mean and variance of the message sent from node $v$ to node $u$, yielding HIB as follows:
\begin{equation}
\resizebox{0.8\linewidth}{!}{$        \displaystyle
    \text{HIB}^{(l)}\rightarrow \sum_{v\in \mathcal{V}}\sum_{u\in \alpha_v^{(l)}}\bigg[\text{log}\Phi\left(z_{h,v\rightarrow u}^{(l)};\mu_{v,u}^{(l)};\sigma_{v,u}^{2,(l)}\right)-\text{log}\sum\limits_{i=1}^mw_i\Phi\left(z_{h,v\rightarrow u}^{(l)};\mu_i;\sigma_i^2\right)\bigg].
    \label{eq:HIB_final}
    $}
\end{equation}
Finally, substituting the instantiated results of AIB and HIB back into Eq. \ref{eq:upper_bound}.
% \begin{equation}
%     I(\mathcal{D}_s;Z_H^{(l)})\rightarrow\sum_{l \in S_A} \text{AIB}^{(l)} + \sum_{l \in S_H} \text{HIB}^{(l)}.
%     \label{eq:upper_bound_final}
% \end{equation}
By minimizing this upper bound, MLGIB constrains redundant information from the source graph, thereby improving robustness against irrelevant label noise during multi-label message passing.
\vspace{0mm}
\section{Experiment}
\subsection{Experiment Setup}
\paragraph{Dataset.}
We conduct experiments on four real-world multi-label datasets including DBLP \citep{akujuobi2019collaborative}, {BlogCatalog} \citep{lakshmanan2010knowledge}, PCG \citep{zhao2023multi} and {Delve} \citep{xiao2022semantic}. Details are provided in Appendix \ref{Dataset_supplement}.

\paragraph{Baseline Models.}
%We compare MLGIB with four categories of baseline methods: representative graph neural networks including GCN \citep{DBLP:conf/iclr/KipfW17} and GAT \citep{velivckovic2017graph}; the first application of the Information Bottleneck principle to graphs, ie., Graph Information Bottleneck (GIB) \citep{wu2020graph}; graph rewiring methods to address the over-squashing problem, including SDRF \citep{DBLP:conf/iclr/ToppingGC0B22}, FOSR \citep{karhadkar2023fosr}, and BORF \citep{nguyen2023revisiting}. and a state-of-the-art multi-label graph approach, CorGCN \citep{bei2025correlation}.
We compare MLGIB with four categories of baselines: representative GNNs, including GCN \citep{DBLP:conf/iclr/KipfW17} and GAT \citep{velivckovic2018graph}; the graph application of the IB principle, GIB \citep{wu2020graph}; over-squashing mitigation methods based on graph rewiring, including SDRF \citep{DBLP:conf/iclr/ToppingGC0B22}, FOSR \citep{karhadkar2023fosr}, and BORF \citep{nguyen2023revisiting}; and the state-of-the-art multi-label graph method CorGCN \citep{bei2025correlation}.

\paragraph{Evaluation Metrics.}
%Following prior work on multi-label graph models \citep{gao2019semi,zhou2021multi,bei2025correlation}, to comprehensively evaluate the performance of our multi-label graph model, we employed a suite of seven established metrics: Macro-AUC, Micro-AUC, Ranking Loss, Hamming Loss, Macro-AP, Micro-AP, and LRAP. More details about the datasets can be found in Appendix \ref{Metric_supplement}.
% Following prior multi-label graph studies \citep{gao2019semi,zhou2021multi,bei2025correlation}, 
We evaluate MLGIB using seven metrics: Macro-AUC, Micro-AUC, Ranking Loss, Hamming Loss, Macro-AP, Micro-AP, and LRAP. Details are provided in Appendix \ref{Metric_supplement}.

\subsection{Performance Evaluation}
%The results of the node classification experiments are summarized in Table \ref{tab:main_results}, where the best results are highlighted in bold. We make the following key observations:
Table \ref{tab:main_results} reports the node classification results, with best results in bold. We summarize two observations:
(1) \textbf{Superiority of MLGIB on multi‑label graphs.} 
% Across four datasets, MLGIB obtains the best results on most metrics. 
Its improvement over GIB shows that the information bottleneck principle needs to be tailored to multi-label message passing, while its advantage over CorGCN highlights that modeling label correlations alone is insufficient, and that preserving predictive information during propagation is equally important.%MLGIB outperforms all baseline models across multiple multi‑label datasets, notably surpassing both GIB and the state‑of‑the‑art model CorGCN. Exceeding GIB demonstrates that the information‑theoretic bottleneck principle is better applied to multi‑label graphs, while outperforming CorGCN validates the effectiveness of this application.
 (2) \textbf{Effectiveness of MLGIB against multi-label over-squashing.} Rewiring methods (SDRF, FOSR, and BORF) provide limited gains and may suffer scalability issues, suggesting that denser or modified connectivity can amplify label noise in multi-label graphs. By contrast, MLGIB regulates information flow without altering the graph structure, achieving a better balance between expressiveness and robustness. %Graph rewiring methods, like SDRF, FORF and BORF, on multi‑label graphs have not led to significant performance improvements. This is because information flow in multi‑label graphs is more complex: altering graph structure can easily introduce noise and reduce structural information. In contrast, our method controls and purifies the information flow without modifying the graph structure, effectively resolving the over‑squashing problem.
\begin{table*}[htbp]
\tiny
\setlength{\tabcolsep}{2pt}
  \centering
  \vspace{0mm}
  \caption{Node classification results on multiple real‑world datasets using various metrics.}
    \begin{threeparttable}
    \begin{tabular}{ccccccccc}
    \toprule
    Dataset & Model & Macro-AUC & Micro-AUC & Ranking Loss & Hamming Loss & Macro-AP & Micro-AP & LRAP \\
    \midrule
    \multirow{8}[2]{*}{DBLP} & GCN   & 0.9485 ± 0.0005 & 0.9527 ± 0.0004 & 0.0510 ± 0.0004 & 0.0883 ± 0.0007 & 0.9049 ± 0.0007 & 0.9214 ± 0.0004 & 0.9447 ± 0.0005 \\
          & GAT   & 0.9501 ± 0.0023 & 0.9540 ± 0.0016 & 0.0575 ± 0.0009 & 0.0944 ± 0.0014 & 0.8901 ± 0.0043 & 0.9123 ± 0.0014 & 0.9387 ± 0.0008 \\
          & GIB   & 0.9478 ± 0.0026 & 0.9517 ± 0.0025 & 0.0636 ± 0.0039 & 0.0989 ± 0.0040 & 0.8868 ± 0.0066 & 0.9079 ± 0.0055 & 0.9319 ± 0.0047 \\
          & FOSR  & 0.9486 ± 0.0006 & 0.9527 ± 0.0005 & 0.0515 ± 0.0009 & 0.0886 ± 0.0010 & 0.9052 ± 0.0008 & 0.9215 ± 0.0008 & 0.9442 ± 0.0010 \\
          & SDRF  & 0.9487 ± 0.0005 & 0.9528 ± 0.0004 & 0.0509 ± 0.0004 & 0.0884 ± 0.0005 & 0.9050 ± 0.0005 & 0.9215 ± 0.0003 & 0.9448 ± 0.0005 \\
          & BORF  & 0.9477 ± 0.0005 & 0.9520 ± 0.0004 & 0.0520 ± 0.0005 & 0.0885 ± 0.0007 & 0.9031 ± 0.0008 & 0.9199 ± 0.0005 & 0.9437 ± 0.0007 \\
          & CORGCN & 0.9504 ± 0.0004 & 0.9545 ± 0.0004 & 0.0699 ± 0.0005 & 0.0983 ± 0.0009 & 0.8926 ± 0.0016 & 0.9134 ± 0.0010 & 0.9226 ± 0.0006 \\
          & \textbf{MLGIB(ours)} & \textbf{0.9699 ± 0.0008} & \textbf{0.9716 ± 0.0007} & \textbf{0.0441 ± 0.0012} & \textbf{0.0664 ± 0.0011} & \textbf{0.9356 ± 0.0018} & \textbf{0.9464 ± 0.0012} & \textbf{0.9531 ± 0.0012} \\
    \midrule
    \multirow{8}[2]{*}{BlogCatalog} & GCN   & 0.4617 ± 0.0017 & 0.6348 ± 0.0037 & 0.2608 ± 0.0037 & 0.0358 ± 0.0003 & 0.0360 ± 0.0005 & 0.0558 ± 0.0022 & 0.2604 ± 0.0110 \\
          & GAT   & 0.5356 ± 0.0135 & 0.7435 ± 0.0026 & 0.2550 ± 0.0025 & 0.0353 ± 0.0003 & 0.0497 ± 0.0065 & 0.1039 ± 0.0163 & 0.2747 ± 0.0114 \\
          & GIB   & 0.5229 ± 0.0105 & 0.7394 ± 0.0005 & 0.2579 ± 0.0004 & 0.0355 ± 0.0000 & 0.0463 ± 0.0045 & 0.0906 ± 0.0029 & 0.2645 ± 0.0008 \\
          & FOSR  & 0.4619 ± 0.0016 & 0.6333 ± 0.0024 & 0.2618 ± 0.0036 & 0.0357 ± 0.0002 & 0.0359 ± 0.0003 & 0.0558 ± 0.0026 & 0.2556 ± 0.0119 \\
          & SDRF  & 0.4621 ± 0.0019 & 0.6347 ± 0.0043 & 0.2615 ± 0.0037 & 0.0359 ± 0.0005 & 0.0361 ± 0.0005 & 0.0558 ± 0.0025 & 0.2572 ± 0.0134 \\
          & BORF  & OOM   & OOM   & OOM   & OOM   & OOM   & OOM   & OOM \\
          & CORGCN & 0.5207 ± 0.0095 & 0.7384 ± 0.0004 & 0.2588 ± 0.0004 & 0.0355 ± 0.0000 & 0.0422 ± 0.0010 & 0.0884 ± 0.0008 & 0.2638 ± 0.0003 \\
          & \textbf{MLGIB(ours)} & \textbf{0.5627 ± 0.0148} & \textbf{0.7527 ± 0.0022} & \textbf{0.2453 ± 0.0022} & \textbf{0.0347 ± 0.0004} & \textbf{0.0691 ± 0.0080} & \textbf{0.1430 ± 0.0151} & \textbf{0.3049 ± 0.0110} \\
    \midrule
    \multirow{8}[2]{*}{PCG} & GCN   & 0.4676 ± 0.0051 & 0.6265 ± 0.0025 & 0.2634 ± 0.0034 & 0.1358 ± 0.0004 & 0.1491 ± 0.0025 & 0.2256 ± 0.0049 & 0.4826 ± 0.0058 \\
          & GAT   & 0.5220 ± 0.0206 & 0.6803 ± 0.0136 & 0.2898 ± 0.0124 & 0.1431 ± 0.0035 & 0.1501 ± 0.0083 & 0.2326 ± 0.0146 & 0.4557 ± 0.0139 \\
          & GIB   & 0.5195 ± 0.0085 & 0.6927 ± 0.0086 & 0.2783 ± 0.0081 & 0.1364 ± 0.0006 & 0.1467 ± 0.0029 & 0.2496 ± 0.0072 & 0.4689 ± 0.0072 \\
          & FOSR  & 0.4738 ± 0.0068 & 0.6299 ± 0.0037 & 0.2621 ± 0.0041 & 0.1356 ± 0.0007 & 0.1508 ± 0.0029 & 0.2293 ± 0.0046 & 0.4838 ± 0.0055 \\
          & SDRF  & 0.4677 ± 0.0052 & 0.6266 ± 0.0025 & 0.2634 ± 0.0034 & 0.1358 ± 0.0004 & 0.1492 ± 0.0026 & 0.2258 ± 0.0049 & 0.4828 ± 0.0057 \\
          & BORF  & 0.5122 ± 0.0038 & 0.6443 ± 0.0024 & 0.2817 ± 0.0024 & 0.1381 ± 0.0006 & 0.1459 ± 0.0025 & 0.2217 ± 0.0042 & 0.4606 ± 0.0042 \\
          & CORGCN & 0.5571 ± 0.0177 & 0.7083 ± 0.0053 & 0.2647 ± 0.0013 & 0.1357 ± 0.0001 & 0.1797 ± 0.0075 & 0.2777 ± 0.0051 & 0.4816 ± 0.0033 \\
          & \textbf{MLGIB(ours)} & \textbf{0.6005 ± 0.0051} & \textbf{0.7252 ± 0.0021} & \textbf{0.2619 ± 0.0024} & \textbf{0.1356 ± 0.0001} & \textbf{0.2079 ± 0.0061} & \textbf{0.2951 ± 0.0055} & \textbf{0.4844 ± 0.0052} \\
    \midrule
    % \multirow{8}[2]{*}{Humloc} & GCN   & 0.6195 ± 0.0143 & 0.8214 ± 0.0046 & 0.1544 ± 0.0040 & \textbf{0.0777 ± 0.0008} & 0.1638 ± 0.0108 & 0.3585 ± 0.0115 & 0.5991 ± 0.0070 \\
    %       & GAT   & 0.5681 ± 0.0312 & 0.8074 ± 0.0160 & 0.1889 ± 0.0143 & 0.0875 ± 0.0043 & 0.1408 ± 0.0180 & 0.2866 ± 0.0344 & 0.5509 ± 0.0207 \\
    %       & GIB   & 0.5767 ± 0.0279 & 0.8278 ± 0.0086 & 0.1780 ± 0.0076 & 0.0823 ± 0.0015 & 0.1307 ± 0.0130 & 0.3020 ± 0.0309 & 0.5522 ± 0.0185 \\
    %       & FOSR  & 0.6208 ± 0.0144 & 0.8220 ± 0.0049 & 0.1543 ± 0.0039 & 0.0778 ± 0.0008 & 0.1643 ± 0.0110 & 0.3589 ± 0.0120 & 0.5990 ± 0.0062 \\
    %       & SDRF  & 0.6193 ± 0.0145 & 0.8213 ± 0.0047 & 0.1544 ± 0.0041 & 0.0778 ± 0.0008 & 0.1638 ± 0.0108 & 0.3585 ± 0.0115 & 0.5993 ± 0.0071 \\
    %       & BORF  & 0.6116 ± 0.0130 & 0.8168 ± 0.0049 & 0.1607 ± 0.0032 & 0.0782 ± 0.0012 & 0.1533 ± 0.0082 & 0.3489 ± 0.0112 & 0.5884 ± 0.0088 \\
    %       & CORGCN & 0.5905 ± 0.0188 & 0.8335 ± 0.0039 & 0.1726 ± 0.0045 & 0.0815 ± 0.0012 & 0.1571 ± 0.0065 & 0.3499 ± 0.0209 & 0.5578 ± 0.0136 \\
    %       & \textbf{MLGIB(ours)} & \textbf{0.7093 ± 0.0223} & \textbf{0.8643 ± 0.0023} & \textbf{0.1410 ± 0.0022} & 0.0790 ± 0.0021 & \textbf{0.2090 ± 0.0086} & \textbf{0.4115 ± 0.0180} & \textbf{0.6250 ± 0.0074} \\
    % \midrule
    \multirow{4}[2]{*}{Delve} & GCN   & 0.7110 ± 0.0075 & 0.8098 ± 0.0042 & 0.0091 ± 0.0003 & 0.0069 ± 0.0000 & 0.0699 ± 0.0027 & 0.1195 ± 0.0030 & 0.9713 ± 0.0007 \\
          & GAT   & 0.8931 ± 0.0020 & 0.9263 ± 0.0009 & 0.0084 ± 0.0001 & 0.0065 ± 0.0000 & 0.1050 ± 0.0032 & 0.1854 ± 0.0030 & 0.9740 ± 0.0003 \\
          % & GIB   & OOM   & OOM   & OOM   & OOM   & OOM   & OOM   & OOM  \\
          & CORGCN & 0.9541 ± 0.0011 & 0.9686 ± 0.0003 & 0.0039 ± 0.0000 & 0.0061 ± 0.0000 & 0.2548 ± 0.0062 & 0.3404 ± 0.0018 & 0.9854 ± 0.0001 \\
          & \textbf{MLGIB(ours)} & \textbf{0.9867 ± 0.0003} & \textbf{0.9903 ± 0.0001} & \textbf{0.0012 ± 0.0000} & \textbf{0.0056 ± 0.0001} & \textbf{0.4414 ± 0.0055} & \textbf{0.5334 ± 0.0052} & \textbf{0.9940 ± 0.0001} \\
    \bottomrule
    \end{tabular}%
        \begin{tablenotes}
            \item[] ``OOM'' stands for Out Of Memory. The implementations provided of FOSR, SDRF, BORF and GIB are not adapted for the large graph Delve. \vspace{0mm}
\end{tablenotes}
\end{threeparttable}
  \label{tab:main_results}%
\end{table*}%

%To validate the expressiveness of MLGIB in preserving informative label signals under multi-label interference, we conduct fine-grained analyses from both label-level performance and cross-setting generalization perspectives.
%\subsubsection{Expressiveness Analysis on Multi-label Label-wise Performance}
%In this part, we conduct experiments on the three datasets—Delve, DBLP, and Humloc. As shown in Fig. \ref{fig:each_label_eval}, the results demonstrate that our model outperforms the SOTA model CorGCN on the vast majority of labels. This result indicates that MLGIB is able to \textbf{consistently preserve predictive signals across different labels}, rather than overfitting to dominant or highly correlated labels. Such behavior reflects stronger expressiveness in disentangling informative label-specific signals from irrelevant ones during message passing. Furthermore, the performance gains are not limited to a subset of labels, but are broadly observed across the label space, demonstrating that MLGIB achieves both effectiveness and stability in multi-label representation learning.
% \begin{figure}
%     \centering
%     \includegraphics[width=0.8\linewidth]{img/label_auc.pdf}
%     \caption{Performance evaluation (AUC) is conducted for each label on the Delve, DBLP, and Humloc datasets, respectively.}
%     \label{fig:each_label_eval}
% \end{figure}

\subsection{Expressiveness Analysis}

To evaluate whether MLGIB can preserve informative label signals under multi-label interference, we analyze its expressiveness from two complementary perspectives: label-wise expressiveness on multi-label graphs and cross-setting expressiveness on single-label graphs.

\subsubsection{Label-wise Expressiveness on Multi-label Graphs}

We conduct label-wise evaluations on Delve, DBLP, and PCG. As shown in Fig. \ref{fig:each_label_eval}, MLGIB outperforms CorGCN on most labels, showing that it consistently preserves predictive signals across labels instead of only improving dominant or highly correlated ones. These results indicate stronger expressiveness in extracting label-specific informative signals from noisy multi-label propagation.

\subsubsection{Cross-setting Expressiveness on Single-label Graphs}

We evaluate MLGIB on the single-label PubMed dataset \citep{sen2008collective}, where nodes represent biomedical articles and edges denote citation links. Details are provided in Appendix \ref{Dataset_supplement}.
As shown in Fig. \ref{fig: siglabel}, MLGIB does not degrade in the single-label setting and even outperforms GCN and CorGCN. This demonstrates that MLGIB enhances the intrinsic expressiveness of message passing rather than overfitting to multi-label-specific patterns.
%\subsubsection{Expressiveness Analysis of Generalization to Single-label Graphs}
%In this section, we conduct experiments on the single‑label PubMed dataset \citep{sen2008collective} to evaluate the generalizability of our model on single‑label data. PubMed is a biomedical literature dataset where each article is a node and citation relationships form the edges. As shown in Fig. \ref{fig: siglabel}, our model does not exhibit significant degradation on single‑label datasets, and even outperforms GCN and the multi‑label SOTA model CorGCN. This suggests that the proposed information bottleneck formulation does not rely on specific multi-label assumptions, but instead learns more generalizable representations. This result further supports that \textbf{MLGIB enhances the intrinsic expressiveness of message passing}, rather than overfitting to multi-label-specific patterns.
% \begin{figure}
%     \centering
%     \includegraphics[width=0.4\linewidth]{img/radar_chart.pdf}
%     \caption{Generalization performance on the single‑label dataset PubMed.}
%     \label{fig: siglabel}
% \end{figure}

\subsection{Robustness Analysis}
%We next evaluate the robustness of MLGIB under challenging conditions where irrelevant label signals are amplified.
We further evaluate the robustness of MLGIB under challenging conditions where irrelevant label signals are likely to be amplified.
%\subsubsection{Robustness Analysis on Label Correlation}
%In multi-label node classification tasks, the label correlation among neighbors determines the classification difficulty. Meanwhile, lower label correlation is more likely to introduce the over‑squashing problem, as irrelevant label information will compress relevant label information in narrow channels. In this experiment, we calculate the average Jaccard similarity \citep{jaccard1912distribution} between nodes' labels and their neighbors' labels, and analyze its impact on the final classification performance. The Jaccard similarity reflects the similarity of label sets between two nodes. As shown in Fig. \ref{fig:jaccard_similarity}, node classification performance declines as the Jaccard similarity with neighbors decreases, while MLGIB exhibits a slower degradation trend and more stable performance fluctuations, \textbf{demonstrating that MLGIB can greatly mitigate the disturbances caused by decreasing label correlation and alleviate the over‑squashing problem.}
\subsubsection{Robustness under Low Label Correlation}

In multi-label node classification, lower label correlation between neighboring nodes usually indicates higher aggregation noise, as irrelevant label signals are more likely to interfere with predictive ones. 
To quantify this effect, we compute the average Jaccard similarity \citep{jaccard1912distribution} between each node's label set and those of its neighbors.
As shown in Fig. \ref{fig:jaccard_similarity}, on the large graph Delve, classification performance generally decreases as neighbor-label similarity becomes lower. 
However, MLGIB exhibits a slower performance degradation and more stable fluctuations than the baselines, demonstrating its robustness against weak label correlation and its ability to suppress irrelevant label noise during message passing.
\begin{figure}[t]
\centering
\begin{subfigure}[t]{0.6\linewidth}
    \centering
\includegraphics[width=\linewidth]{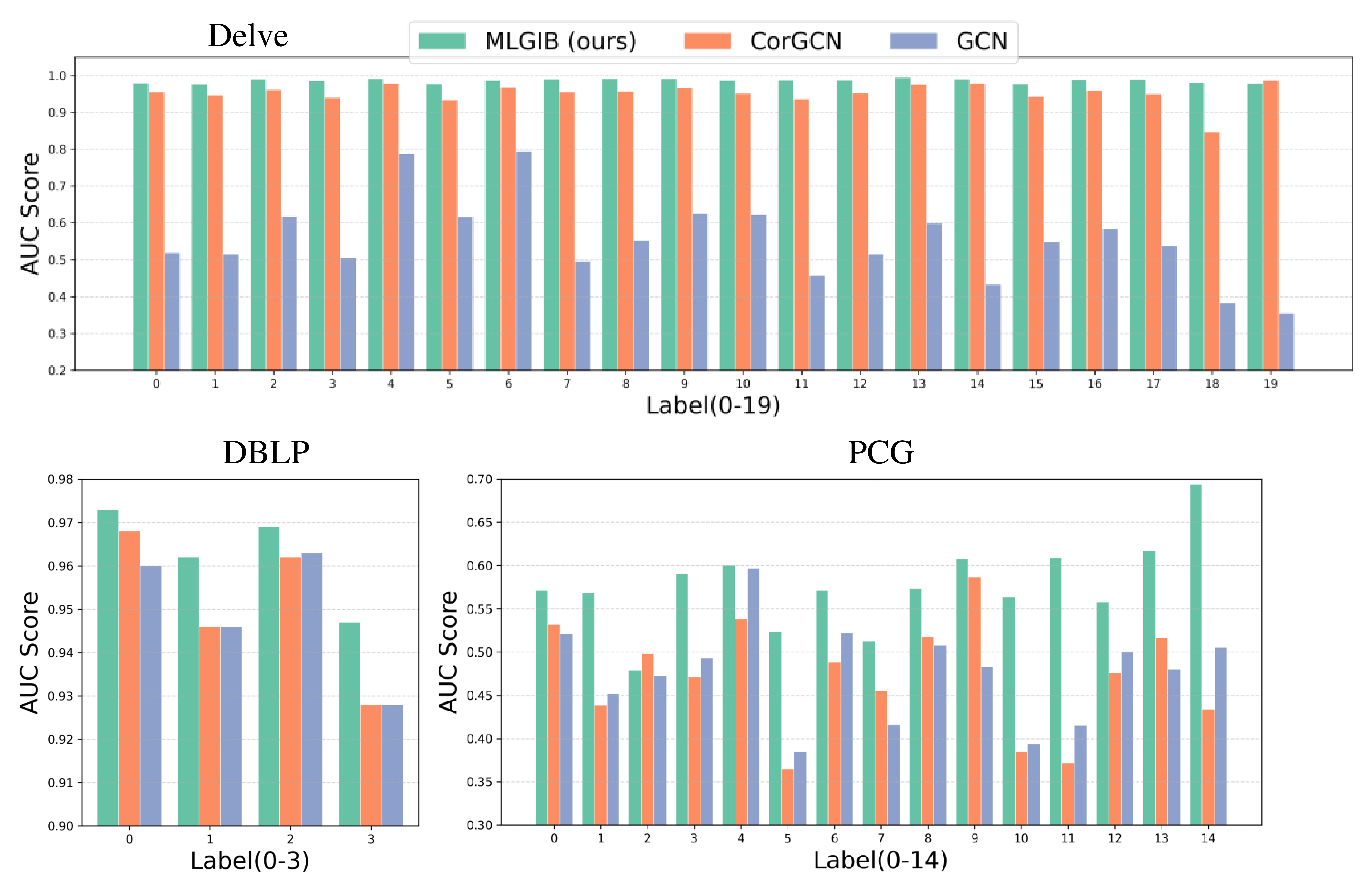}
    \caption{ Label-wise evaluations.}
    \label{fig:each_label_eval}
\end{subfigure}
\hfill
\begin{subfigure}[t]{0.3\linewidth}
    \centering
    \includegraphics[width=\linewidth]{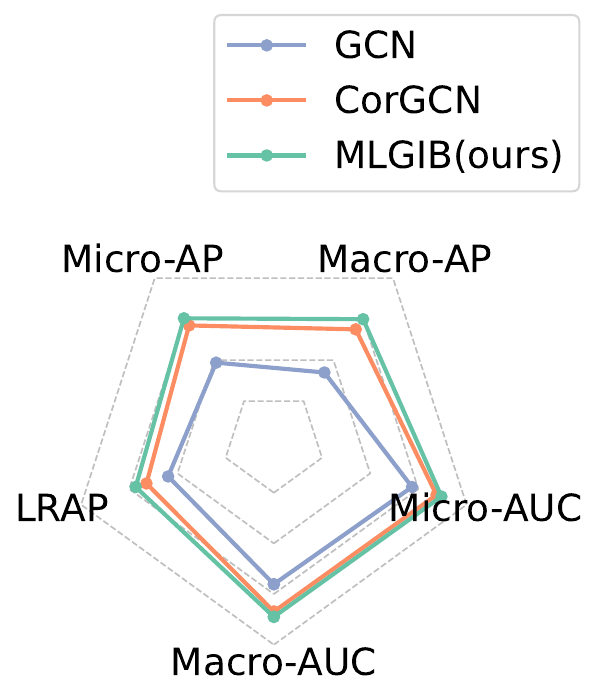}
    \caption{Single-label dataset}
    \label{fig: siglabel}
\end{subfigure}
\caption{(a) Expressiveness Analysis (AUC) is conducted for each label on the Delve, DBLP, and PCG datasets, respectively. (b) Cross-setting Expressiveness on Single-label Graphs PubMed.} \vspace{0mm}
\label{fig:baseline}
\end{figure}
\begin{figure}
    \centering
    \includegraphics[width=1\linewidth]{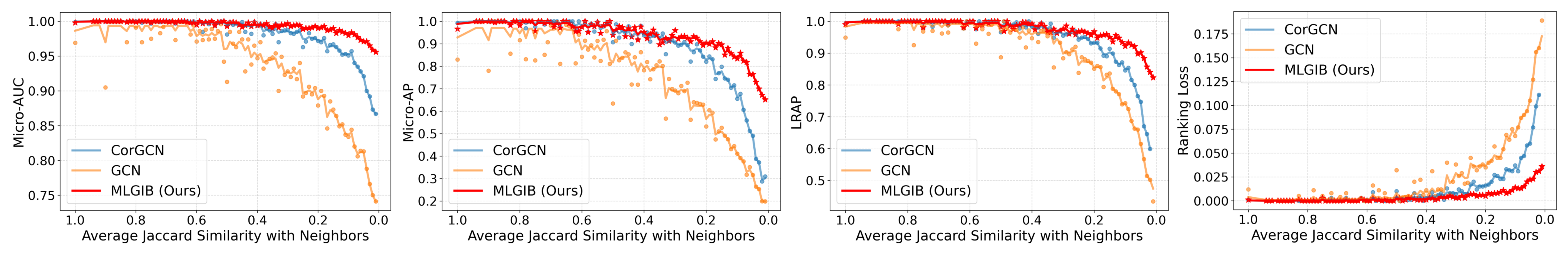}
    \caption{Impact of decreased Jaccard similarity with neighbor nodes on performance. Dots denote true values and lines denote fitted trends.} \vspace{0mm}
    \label{fig:jaccard_similarity}
\end{figure}

%\subsubsection{Robustness Analysis on Node Degree}
\subsubsection{Robustness under High Node Degree}

%Increasing node degree presents a dual effect. On the one hand, a larger neighborhood provides more information that can benefit classification; on the other hand, it also introduces more noise, which increases classification disturbance and triggers the over‑squashing problem. As shown in Fig. \ref{fig:degree}, as node degree increases, LRAP decreases while Hamming Loss rises, indicating a drop in the model’s confidence in its classification results. Our model, however, shows significantly less susceptibility to the disturbance caused by high degrees than CorGCN and GCN, \textbf{demonstrating that MLGIB’s strong filtering capability on multiple information sources successfully mitigates over‑squashing.}
Node degree has a dual effect on multi-label message passing. 
A larger neighborhood may provide richer context, but also introduce more irrelevant messages, increasing aggregation noise and aggravating over-squashing. 
As shown in Fig. \ref{fig:degree}, LRAP decreases and Hamming Loss increases with node degree of Delve dataset, indicating that high-degree nodes are more vulnerable to noisy aggregation. 
Compared with baselines, MLGIB shows much smaller performance degradation, suggesting that its information-filtering mechanism improves robustness to noisy and excessively large neighborhoods.

\begin{figure}
    \centering 
    \includegraphics[width=0.6\linewidth]{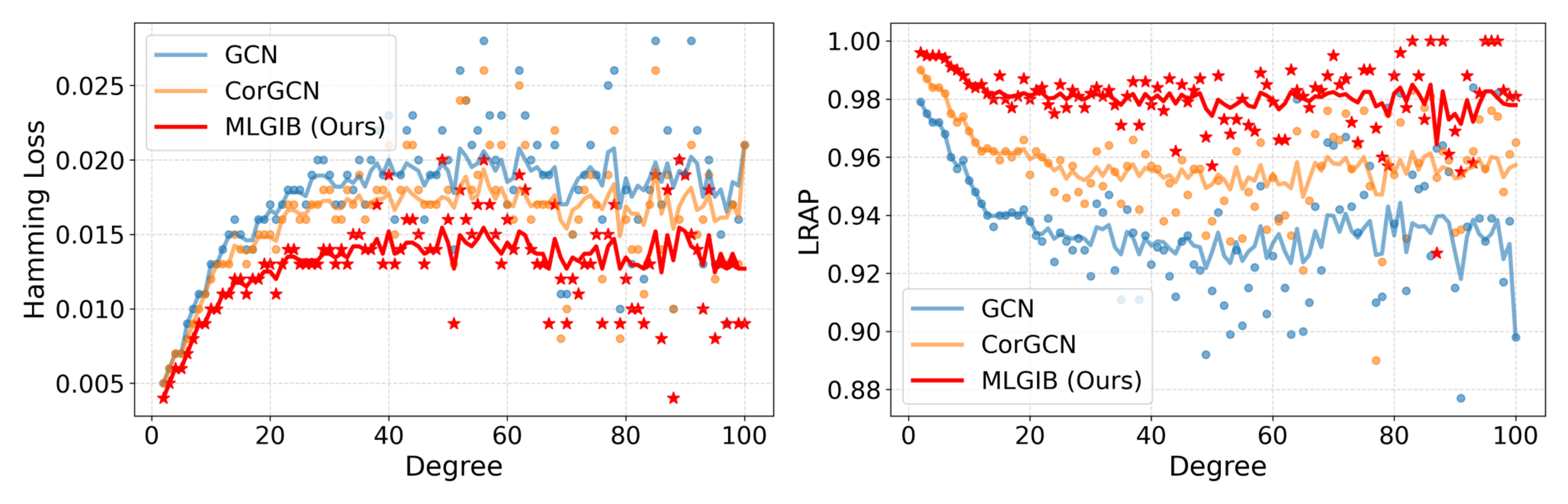}
    \caption{Impact of increased node degree on performance, where dots and lines follow Fig. \ref{fig:jaccard_similarity}.} \vspace{0mm}
    \label{fig:degree}
\end{figure}

\subsubsection{Robustness Analysis on Structural Perturbations}
Beyond inherent graph properties, we further evaluate robustness by actively injecting structural noise. 
We conduct this experiment on DBLP, whose clear structural semantics make it suitable for assessing the impact of perturbations on multi-label aggregation. 
Specifically, we randomly add edges according to a proportion of the original edge count, ranging from 0 to 1; when the proportion reaches 1, the graph contains twice as many edges as the original. 
As shown in Fig. \ref{fig:noise}, all models degrade as more random edges are added, but MLGIB exhibits the slowest degradation. 
This confirms that MLGIB is robust to structural perturbations by filtering irrelevant information from noisy edges, thereby mitigating multi-label over-squashing caused by rapidly expanding information sources.
%Unlike inherent graph properties such as label correlation and node degree, we next actively introduce structural perturbation noise to the graph nodes. We select the DBLP dataset for this experiment. Compared to noisier multi‑label datasets like BlogCatalog, DBLP possesses more rigorous structural semantics, which can more sensitively reveal how adversarial perturbations undermine the multi‑label aggregation mechanism. We randomly add edges to the DBLP graph according to a proportion of its original edge count, ranging from 0 to 1. When the proportion reaches 1, the graph contains twice the original number of edges. As shown in the Fig. \ref{fig:noise}, the performance of all models declines to varying degrees as the number of added edges increases, but our model exhibits the slowest degradation, demonstrating that MLGIB possesses strong robustness. This capability is exactly an effective means of addressing the over‑squashing problem caused by exponential information growth.
\begin{figure}
    \centering 
    \includegraphics[width=1\linewidth]{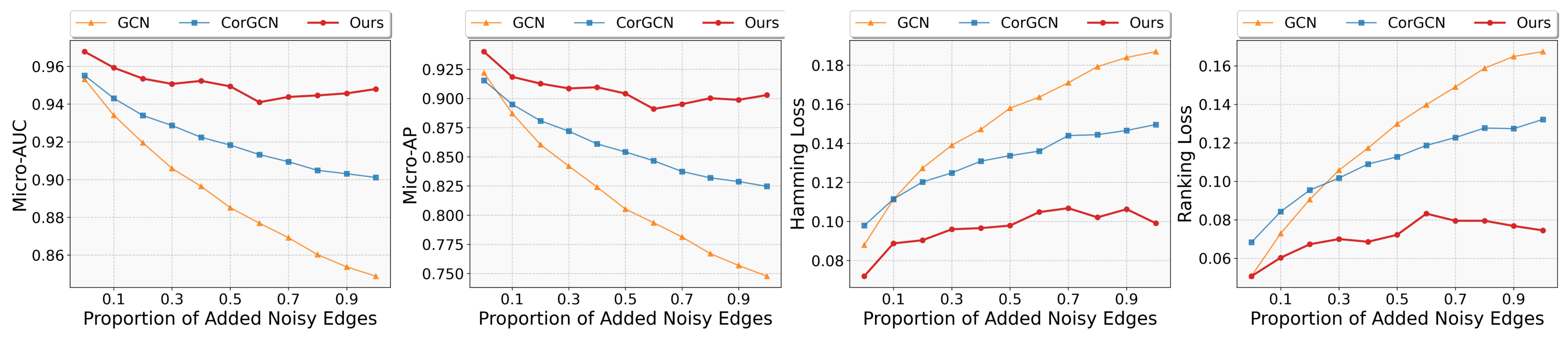}
    \caption{Impact of added noisy edges on performance.}
    \label{fig:noise}
\end{figure}

\begin{table*}[tb]
\setlength{\tabcolsep}{2.5pt}
\tiny
  \centering
  \caption{The results of ablation study.}
    \begin{tabular}{ccccccccc}
    \toprule
    Dataset & Model & Macro-AUC & Micro-AUC & Ranking Loss & Hamming Loss & Macro-AP & Micro-AP & LRAP \\
    \midrule
    \multirow{3}[2]{*}{DBLP} & MLGIB & \textbf{0.9699 ± 0.0008} & \textbf{0.9716 ± 0.0007} & \textbf{0.0441 ± 0.0012} & \textbf{0.0664 ± 0.0011} & \textbf{0.9356 ± 0.0018} & \textbf{0.9464 ± 0.0012} & \textbf{0.9531 ± 0.0012} \\
          & w/o AIB Instantiation  &0.9589 ± 0.0023  &0.9616 ± 0.0018  &0.0588 ± 0.0026  &0.0852 ± 0.0028  &0.9129 ± 0.0048  &0.9273 ± 0.0034  &0.9366 ± 0.0030  \\
          & w/o HIB Instantiation &0.9458 ± 0.0055  &0.9508 ± 0.0046  &0.0581 ± 0.0036  &0.0957 ± 0.0029  &0.8861 ± 0.0082  &0.9095 ± 0.0058  &0.9378 ± 0.0035  \\
    \midrule
    \multirow{3}[2]{*}{BlogCatalog} & MLGIB & \textbf{0.5627 ± 0.0148} & \textbf{0.7527 ± 0.0022} & \textbf{0.2453 ± 0.0022} & \textbf{0.0347 ± 0.0004} & \textbf{0.0691 ± 0.0080} & \textbf{0.1430 ± 0.0151} & \textbf{0.3049 ± 0.0110} \\
          & w/o AIB Instantiation  &0.5116 ± 0.0065  &0.7387 ± 0.0004  &0.2582 ± 0.0001  &0.0355 ± 0.0000  &0.0427 ± 0.0029  &0.0888 ± 0.0013  &0.2637 ± 0.0005  \\
          & w/o HIB Instantiation &0.5013 ± 0.0153  &0.7202 ± 0.0053  &0.2749 ± 0.0054  &0.0355 ± 0.0000  &0.0404 ± 0.0031  &0.0819 ± 0.0035  &0.2520 ± 0.0090  \\
    \midrule
    \multirow{3}[1]{*}{PCG} & MLGIB & \textbf{0.6005 ± 0.0051} & \textbf{0.7252 ± 0.0021} & \textbf{0.2619 ± 0.0024} & \textbf{0.1356 ± 0.0001} & \textbf{0.2079 ± 0.0061} & \textbf{0.2951 ± 0.0055} & \textbf{0.4844 ± 0.0052} \\
          & w/o AIB Instantiation  &0.5475 ± 0.0144  &0.7079 ± 0.0023  &0.2678 ± 0.0015  &0.1356 ± 0.0003  &0.1610 ± 0.0076  &0.2718 ± 0.0072  &0.4775 ± 0.0055  \\
          & w/o HIB Instantiation &0.5233 ± 0.0101  &0.6629 ± 0.0105  &0.3051 ± 0.0092  &0.1477 ± 0.0044  &0.1488 ± 0.0029  &0.2174 ± 0.0083  &0.4428 ± 0.0099  \\
    \midrule
    % \multirow{3}[1]{*}{Humloc} & MLGIB & \textbf{0.7093 ± 0.0223} & \textbf{0.8643 ± 0.0023} & \textbf{0.1410 ± 0.0022} & \textbf{0.0790 ± 0.0021} & \textbf{0.2090 ± 0.0086} & \textbf{0.4115 ± 0.0180} & \textbf{0.6250 ± 0.0074} \\
    %       & w/o $I(Y_t;Z_H^{(L)})$  & 0.5256 ± 0.0204 & 0.5739 ± 0.0841 & 0.4431 ± 0.0921 & 0.4347 ± 0.0724 & 0.1000 ± 0.0068 & 0.1094 ± 0.0375 & 0.2684 ± 0.0850 \\
    %       & w/o $I(\mathcal{D}_s;Z_H^{(l)})$ & 0.6038 ± 0.0289 & 0.8336 ± 0.0063 & 0.1729 ± 0.0070 & 0.0817 ± 0.0008 & 0.1374 ± 0.0127 & 0.3201 ± 0.0254 & 0.5539 ± 0.0177 \\
    % \midrule
    \multirow{3}[2]{*}{Delve} & MLGIB & \textbf{0.9867 ± 0.0003} & \textbf{0.9903 ± 0.0001} & \textbf{0.0012 ± 0.0000} & \textbf{0.0056 ± 0.0001} & \textbf{0.4414 ± 0.0055} & \textbf{0.5334 ± 0.0052} & \textbf{0.9940 ± 0.0001} \\
          & w/o AIB Instantiation  &0.9493 ± 0.0037  &0.9711 ± 0.0007  &0.0024 ± 0.0001  &0.0063 ± 0.0000  &0.2578 ± 0.0044  &0.3602 ± 0.0042  &0.9906 ± 0.0002  \\
          & w/o HIB Instantiation &0.8825 ± 0.0040  &0.9177 ± 0.0029  &0.0082 ± 0.0002  &0.0065 ± 0.0000  &0.1050 ± 0.0046  &0.1786 ± 0.0063  &0.9738 ± 0.0005  \\
    \bottomrule
    \end{tabular} \vspace{0mm}
  \label{tab:ablation_study}%
\end{table*}%
\subsection{Ablation Study}
% Table generated by Excel2LaTeX from sheet 'Sheet1'

%In this section, we perform an ablation study on our proposed framework MLGIB by removing the instantiation of $I(Y_t;Z_H^{(L)})$ and $I(\mathcal{D}_s;Z_H^{(l)})$, respectively. As shown in Table \ref{tab:ablation_study}, the following conclusions can be drawn:
%\begin{itemize}
 %   \item \textbf{Impact of Module Removal on Performance.} Removing any module results in a noticeable decline in performance, underscoring the effectiveness of our proposed modules.
 %   \item \textbf{Significant Performance Reduction without the instantiation of $I(Y_t;Z_H^{(L)})$.} Because this part requires the transmitted information to align with the target node’s labels, it directly affects the downstream node‑classification task, and therefore has a significant impact on performance.
 %   \item \textbf{Performance Decline without the instantiation of $I(\mathcal{D}_s;Z_H^{(l)})$.}This stage pertains to the effective filtering of information, requiring purification and filtering of the source node’s structure and features, thus also affecting the final performance.
%\end{itemize}
We conduct an ablation study to evaluate the contribution of each component in MLGIB, by fixing the instantiation of the lower bound and replacing the instantiation of AIB and HIB with the original graph structure $A$ and node embeddings $Z_{X}^{(l)}$, respectively. Table \ref{tab:ablation_study} shows:
 (1) \textbf{Both components are essential.} 
    Removing either leads to consistent performance degradation, demonstrating that both informative message-passing paths and purified messages are necessary for effective multi-label message passing.
(2) \textbf{Effect of removing AIB Instantiation.} 
    Without it, MLGIB lacks the constraint on redundant message paths from source-node data, increasing the difficulty of message purification when there are many redundant paths.
(3) \textbf{Effect of removing HIB Instantiation.} 
    Without it, performance drops significantly, especially on the Delve dataset, further strengthening our insight: \textbf{In multi-label graphs, over-squashing is not merely caused by structural constraints, but by whether informative signals can be extracted from partially overlapping label information.} This demonstrates that we have substantially alleviated over-squashing in multi-label graphs.

\section{Conclusion}
%We propose a novel theoretical framework MLGIB to alleviate the over-squashing problem in multi-label graphs. Initially, we theoretically establish the theoretical foundation of MLGIB by constructing a Markov-chain search space and deriving the corresponding variational upper and lower bounds. Subsequently, we instantiate the MLGIB framework by incorporating the distinctive properties of multi-label graphs, resulting in a practical message-passing methodspecifically designed for this context. Finally, extensive experiments on four real-world multi-label datasets demonstrate our approach's superiority over baseline and SOTA models.

In this paper, we investigated over-squashing in multi-label graphs, where message passing not only compresses long-range information but also mixes predictive label signals with irrelevant noise caused by partial label overlap. To address this issue, we proposed the MLGIB, which formulates multi-label message passing as constrained information transmission. By balancing expressiveness and robustness, MLGIB preserves predictive label signals while suppressing irrelevant noise through tractable variational bounds and an end-to-end differentiable architecture. Experiments on real-world multi-label datasets show that MLGIB consistently outperforms representative GNNs, graph rewiring methods, graph information bottleneck methods, and state-of-the-art multi-label graph models. 

%Further analyses demonstrate its stronger label-wise expressiveness, generalization ability, and robustness under weak label correlation, high node degree, and structural perturbations.
\bibliography{references}

@inproceedings{DBLP:conf/iclr/0002Y21,
  author       = {Uri Alon and
                  Eran Yahav},
  title        = {On the Bottleneck of Graph Neural Networks and its Practical Implications},
  booktitle    = {{ICLR} 2021,
                  Virtual Event, Austria, May 3-7, 2021},
  publisher    = {OpenReview.net},
  year         = {2021},
  bibsource    = {dblp computer science bibliography, https://dblp.org}
}

@inproceedings{DBLP:conf/iclr/ToppingGC0B22,
  author       = {Jake Topping and
                  Francesco Di Giovanni and
                  Benjamin Paul Chamberlain and
                  Xiaowen Dong and
                  Michael M. Bronstein},
  title        = {Understanding over-squashing and bottlenecks on graphs via curvature},
  booktitle    = {{ICLR}
                  2022, Virtual Event, April 25-29, 2022},
  publisher    = {OpenReview.net},
  year         = {2022},
  bibsource    = {dblp computer science bibliography, https://dblp.org}
}

@inproceedings{DBLP:conf/iclr/KipfW17,
  author       = {Thomas N. Kipf and
                  Max Welling},
  title        = {Semi-Supervised Classification with Graph Convolutional Networks},
  booktitle    = {5th International Conference on Learning Representations, {ICLR} 2017,
                  Toulon, France, April 24-26, 2017, Conference Track Proceedings},
  publisher    = {OpenReview.net},
  year         = {2017},
  url          = {https://openreview.net/forum?id=SJU4ayYgl},
  bibsource    = {dblp computer science bibliography, https://dblp.org}
}

@article{sen2008collective,
  title={Collective classification in network data},
  author={Sen, Prithviraj and Namata, Galileo and Bilgic, Mustafa and Getoor, Lise and Galligher, Brian and Eliassi-Rad, Tina},
  journal={AI magazine},
  volume={29},
  number={3},
  pages={93--93},
  year={2008}
}

@inproceedings{nguyen2023revisiting,
  title={Revisiting over-smoothing and over-squashing using ollivier-ricci curvature},
  author={Nguyen, Khang and Hieu, Nong Minh and Nguyen, Vinh Duc and Ho, Nhat and Osher, Stanley and Nguyen, Tan Minh},
  booktitle={International Conference on Machine Learning},
  pages={25956--25979},
  year={2023},
  organization={PMLR}
}

@inproceedings{karhadkar2023fosr,
  title={FoSR: First-order spectral rewiring for addressing oversquashing in GNNs},
  author={Karhadkar, Kedar and Banerjee, Pradeep and Montufar, Guido},
  booktitle={International Conference on Learning Representations},
  year={2023}
}

@article{xiao2022semantic,
  title={Semantic guide for semi-supervised few-shot multi-label node classification},
  author={Xiao, Lin and Xu, Pengyu and Jing, Liping and Akujuobi, Uchenna and Zhang, Xiangliang},
  journal={Information Sciences},
  volume={591},
  pages={235--250},
  year={2022},
  publisher={Elsevier}
}

@inproceedings{gao2019semi,
  title={Semi-supervised graph embedding for multi-label graph node classification},
  author={Gao, Kaisheng and Zhang, Jing and Zhou, Cangqi},
  booktitle={International Conference on Web Information Systems Engineering},
  pages={555--567},
  year={2019},
  organization={Springer}
}

@article{zhou2021multi,
  title={Multi-label graph node classification with label attentive neighborhood convolution},
  author={Zhou, Cangqi and Chen, Hui and Zhang, Jing and Li, Qianmu and Hu, Dianming and Sheng, Victor S},
  journal={Expert Systems with Applications},
  volume={180},
  pages={115063},
  year={2021},
  publisher={Elsevier}
}

@article{wu2020graph,
  title={Graph information bottleneck},
  author={Wu, Tailin and Ren, Hongyu and Li, Pan and Leskovec, Jure},
  journal={Advances in Neural Information Processing Systems},
  volume={33},
  pages={20437--20448},
  year={2020}
}

@inproceedings{alemi2017deep,
  title={Deep Variational Information Bottleneck},
  author={Alemi, Alexander A and Fischer, Ian and Dillon, Joshua V and Murphy, Kevin},
  booktitle={International Conference on Learning Representations},
  year={2017}
}

@article{mikolov2013distributed,
  title={Distributed representations of words and phrases and their compositionality},
  author={Mikolov, Tomas and Sutskever, Ilya and Chen, Kai and Corrado, Greg S and Dean, Jeff},
  journal={Advances in neural information processing systems},
  volume={26},
  year={2013}
}

@article{nguyen2010estimating,
  title={Estimating divergence functionals and the likelihood ratio by convex risk minimization},
  author={Nguyen, XuanLong and Wainwright, Martin J and Jordan, Michael I},
  journal={IEEE Transactions on Information Theory},
  volume={56},
  number={11},
  pages={5847--5861},
  year={2010},
  publisher={IEEE}
}

@inproceedings{bei2025correlation,
  title={Correlation-aware graph convolutional networks for multi-label node classification},
  author={Bei, Yuanchen and Chen, Weizhi and Chen, Hao and Zhou, Sheng and Yang, Carl and Fan, Jiapei and Huang, Longtao and Bu, Jiajun},
  booktitle={Proceedings of the 31st ACM SIGKDD Conference on Knowledge Discovery and Data Mining V. 1},
  pages={37--48},
  year={2025}
}

@inproceedings{akujuobi2019collaborative,
  title={Collaborative graph walk for semi-supervised multi-label node classification},
  author={Akujuobi, Uchenna and Yufei, Han and Zhang, Qiannan and Zhang, Xiangliang},
  booktitle={2019 IEEE International Conference on Data Mining (ICDM)},
  pages={1--10},
  year={2019},
  organization={IEEE}
}

@article{lakshmanan2010knowledge,
  title={Knowledge discovery in the blogosphere: Approaches and challenges},
  author={Lakshmanan, Geetika and Oberhofer, Martin},
  journal={IEEE internet computing},
  volume={14},
  number={2},
  pages={24--32},
  year={2010},
  publisher={IEEE}
}

@article{zhao2023multi,
  title={Multi-label node classification on graph-structured data},
  author={Zhao, Tianqi and Dong, Ngan Thi and Hanjalic, Alan and Khosla, Megha},
  journal={arXiv preprint arXiv:2304.10398},
  year={2023}
}

@inproceedings{velivckovic2018graph,
  title={Graph Attention Networks},
  author={Veli{\v{c}}kovi{\'c}, Petar and Cucurull, Guillem and Casanova, Arantxa and Romero, Adriana and Li{\`o}, Pietro and Bengio, Yoshua},
  booktitle={International Conference on Learning Representations},
  year={2018}
}

@book{cover2012elements,
  title={Elements of information theory},
  author={Cover, Thomas M. and Thomas, Joy A.},
  year={2012},
  publisher={John Wiley \& Sons}
}

@article{jaccard1912distribution,
  title={The distribution of the flora in the alpine zone. 1},
  author={Jaccard, Paul},
  journal={New phytologist},
  volume={11},
  number={2},
  pages={37--50},
  year={1912},
  publisher={Wiley Online Library}
}

@inproceedings{DBLP:conf/nips/JamadandiRB24,
  author       = {Adarsh Jamadandi and
                  Celia Rubio{-}Madrigal and
                  Rebekka Burkholz},
  title        = {Spectral Graph Pruning Against Over-Squashing and Over-Smoothing},
  booktitle    = {NeurIPS},
  year         = {2024}
}

@inproceedings{DBLP:conf/nips/ZhuYZHAK20,
  author       = {Jiong Zhu and
                  Yujun Yan and
                  Lingxiao Zhao and
                  Mark Heimann and
                  Leman Akoglu and
                  Danai Koutra},
  title        = {Beyond Homophily in Graph Neural Networks: Current Limitations and
                  Effective Designs},
  booktitle    = {NeurIPS},
  year         = {2020}
}

@inproceedings{yu2014large,
  title={Large-scale multi-label learning with missing labels},
  author={Yu, Hsiang-Fu and Jain, Prateek and Kar, Purushottam and Dhillon, Inderjit},
  booktitle={International conference on machine learning},
  pages={593--601},
  year={2014},
  organization={PMLR}
}

@article{bhatia2015sparse,
  title={Sparse local embeddings for extreme multi-label classification},
  author={Bhatia, Kush and Jain, Himanshu and Kar, Purushottam and Varma, Manik and Jain, Prateek},
  journal={Advances in neural information processing systems},
  volume={28},
  year={2015}
}

@article{tishby2000information,
  title={The information bottleneck method},
  author={Tishby, Naftali and Pereira, Fernando C and Bialek, William},
  journal={arXiv preprint physics/0004057},
  year={2000}
}

@inproceedings{tishby2015deep,
  title={Deep learning and the information bottleneck principle},
  author={Tishby, Naftali and Zaslavsky, Noga},
  booktitle={2015 ieee information theory workshop (itw)},
  pages={1--5},
  year={2015},
  organization={Ieee}
}

@article{saxe2019information,
  title={On the information bottleneck theory of deep learning},
  author={Saxe, Andrew M and Bansal, Yamini and Dapello, Joel and Advani, Madhu and Kolchinsky, Artemy and Tracey, Brendan D and Cox, David D},
  journal={Journal of Statistical Mechanics: Theory and Experiment},
  volume={2019},
  number={12},
  pages={124020},
  year={2019},
  publisher={IOP Publishing and SISSA}
}

@inproceedings{DBLP:conf/icml/LiangBSXPS25,
  author       = {Langzhang Liang and
                  Fanchen Bu and
                  Zixing Song and
                  Zenglin Xu and
                  Shirui Pan and
                  Kijung Shin},
  title        = {Mitigating Over-Squashing in Graph Neural Networks by Spectrum-Preserving
                  Sparsification},
  booktitle    = {{ICML}},
  series       = {Proceedings of Machine Learning Research},
  publisher    = {{PMLR} / OpenReview.net},
  year         = {2025}
}

@inproceedings{DBLP:conf/cikm/SaberS25,
  author       = {Danial Saber and
                  Amirali Salehi{-}Abari},
  title        = {Empirical Study of Over-Squashing in GNNs and Causal Estimation of
                  Rewiring Strategies},
  booktitle    = {{CIKM}},
  pages        = {2525--2534},
  publisher    = {{ACM}},
  year         = {2025}
}

@inproceedings{DBLP:conf/aaai/AziziKHB26,
  author       = {Niloofar Azizi and
                  Nils M. Kriege and
                  Nicholas J. A. Harvey and
                  Horst Bischof},
  title        = {Spectral Basis Learning for Expressive Graph Neural Networks in Link
                  Prediction},
  booktitle    = {{AAAI}},
  pages        = {19640--19648},
  publisher    = {{AAAI} Press},
  year         = {2026}
}

@inproceedings{DBLP:conf/aaai/HevapathigeWZ26,
  author       = {Asela Hevapathige and
                  Asiri Wijesinghe and
                  Ahad N. Zehmakan},
  title        = {Beyond Fixed Depth: Adaptive Graph Neural Networks for Node Classification
                  Under Varying Homophily},
  booktitle    = {{AAAI}},
  pages        = {21717--21725},
  publisher    = {{AAAI} Press},
  year         = {2026}
}

@inproceedings{DBLP:conf/aaai/LiuY26,
  author       = {Lihui Liu and
                  Yuchen Yan},
  title        = {{MORGAN:} To Bridge Mixture of Experts and Spectral Graph Neural Network},
  booktitle    = {{AAAI}},
  pages        = {23783--23791},
  publisher    = {{AAAI} Press},
  year         = {2026}
}

@inproceedings{DBLP:conf/iclr/BergnaCOLH25,
  author       = {Richard Bergna and
                  Sergio Calvo{-}Ordo{\~{n}}ez and
                  Felix L. Opolka and
                  Pietro Lio and
                  Jos{\'{e}} Miguel Hern{\'{a}}ndez{-}Lobato},
  title        = {Uncertainty Modeling in Graph Neural Networks via Stochastic Differential
                  Equations},
  booktitle    = {{ICLR}},
  publisher    = {OpenReview.net},
  year         = {2025}
}

@inproceedings{DBLP:conf/iclr/LiG0025,
  author       = {Shouheng Li and
                  Floris Geerts and
                  Dongwoo Kim and
                  Qing Wang},
  title        = {Towards Bridging Generalization and Expressivity of Graph Neural Networks},
  booktitle    = {{ICLR}},
  publisher    = {OpenReview.net},
  year         = {2025}
}

@article{wu2025graph,
  title={Graph contrastive learning on multi-label classification for recommendations},
  author={Wu, Jiayang and Gan, Wensheng and Lu, Huashen and Yu, Philip S},
  journal={ACM Transactions on Intelligent Systems and Technology},
  volume={16},
  number={4},
  pages={1--19},
  year={2025},
  publisher={ACM New York, NY}
}

@inproceedings{sun2025multi,
  title={Multi-label node classification with label influence propagation},
  author={Sun, Yifei and Liu, Zemin and Hooi, Bryan and Yang, Yang and Fathony, Rizal and Chen, Jia and He, Bingsheng},
  booktitle={The Thirteenth International Conference on Learning Representations},
  year={2025}
}

@inproceedings{wo2025local,
  title={Local Homophily-Aware Graph Neural Network with Adaptive Polynomial Filters for Scalable Graph Anomaly Detection},
  author={Wo, Zengyi and Shao, Minglai and Zhang, Shiyu and Wang, Ruijie},
  booktitle={Proceedings of the 31st ACM SIGKDD Conference on Knowledge Discovery and Data Mining V. 2},
  pages={3180--3191},
  year={2025}
}

@inproceedings{loveland2025unveiling,
  title={Unveiling the impact of local homophily on gnn fairness: In-depth analysis and new benchmarks},
  author={Loveland, Donald and Koutra, Danai},
  booktitle={Proceedings of the 2025 SIAM International Conference on Data Mining (SDM)},
  pages={608--617},
  year={2025},
  organization={SIAM}
}

@inproceedings{jiang2026does,
  title={Does Homophily Help in Robust Test-time Node Classification?},
  author={Jiang, Yan and Qiu, Ruihong and Huang, Zi},
  booktitle={Proceedings of the Nineteenth ACM International Conference on Web Search and Data Mining},
  pages={271--281},
  year={2026}
}

@inproceedings{chen2025beyond,
  title={Beyond Homophily: Graph Contrastive Learning with Macro-Micro Message Passing},
  author={Chen, Yiyuan and Guan, Donghai and Yuan, Weiwei and Zang, Tianzi},
  booktitle={Proceedings of the AAAI Conference on Artificial Intelligence},
  volume={39(15)},
  pages={15948--15956},
  year={2025}
}

@inproceedings{walkega2025expressive,
  title={Expressive power of temporal message passing},
  author={Wa{\l}{\k{e}}ga, Przemys{\l}aw Andrzej and Rawson, Michael},
  booktitle={Proceedings of the AAAI Conference on Artificial Intelligence},
  volume={39(20)},
  pages={21000--21008},
  year={2025}
}

@inproceedings{kingma2014auto,
  title={Auto-Encoding Variational Bayes},
  author={Kingma, Diederik P. and Welling, Max},
  booktitle={International Conference on Learning Representations (ICLR)},
  year={2014}
}
%\bibliographystyle{plain}
% \newpage %supple

%%%%%%%%%%%%%%%%%%%%%%%%%%%%%%%%%%%%%%%%%%%%%%%%%%%%%%%%%%%%
\appendix
\section{Theoretical Supplement}
\label{proof_supplement}
\subsection{Proof of Proposition \ref{proposition:lower_bound}}
\begin{proof}
To derive the lower bound of the mutual information $I(Y_t; Z_H^{(L)})$, we utilize the Nguyen-Wainwright-Jordan lower bound \citep{nguyen2010estimating}. For any two distributions $p(X)$ and $q(X)$:
\begin{equation}
    \text{KL}(p \| q) = \sup_{g} \left( \mathbb{E}_{p(X)}[g(X)] - \mathbb{E}_{q(X)}[\text{exp}({g(X)-1})] \right).
\end{equation}
Recall that $I(Y_t; Z_H^{(L)}) = \text{KL} \big( \mathbb{P}(Y_t, Z_H^{(L)}) \parallel \mathbb{P}(Y_t)\mathbb{P}(Z_H^{(L)}) \big)$, for any measurable function $g(Y_t, Z_H^{(L)})$, we have:
\begin{equation}
    I(Y_t; Z_H^{(L)}) \geq \mathbb{E}_{\mathbb{P}(Y_t, Z_H^{(L)})} [g(Y_t, Z_H^{(L)})] 
    - \mathbb{E}_{\mathbb{P}(Y_t)\mathbb{P}(Z_H^{(L)})} [\text{exp}({g(Y_t, Z_H^{(L)})-1})].
\end{equation}
Define the variational function $g(Y_t, Z_H^{(L)}) = 1 + \log \frac{\mathbb{Q}_1(Y_t \mid Z_H^{(L)})}{\mathbb{Q}_2(Y_t)}$. Substituting this into the inequality, the first term becomes:
\begin{equation}
    \mathbb{E}_{\mathbb{P}(Y_t, Z_H^{(L)})} \left[ 1 + \log \frac{\mathbb{Q}_1(Y_t \mid Z_H^{(L)})}{\mathbb{Q}_2(Y_t)} \right] = 1 + \mathbb{E}_{\mathbb{P}(Y_t, Z_H^{(L)})} \left[ \log \frac{\mathbb{Q}_1(Y_t \mid Z_H^{(L)})}{\mathbb{Q}_2(Y_t)} \right].
\end{equation}
The second term simplifies as:
\begin{equation}
    \mathbb{E}_{\mathbb{P}(Y_t)\mathbb{P}(Z_H^{(L)})} \left[ \text{exp}({(1 + \log \frac{\mathbb{Q}_1(Y_t \mid Z_H^{(L)})}{\mathbb{Q}_2(Y_t)}) - 1}) \right] = \mathbb{E}_{\mathbb{P}(Y_t)\mathbb{P}(Z_H^{(L)})} \left[ \frac{\mathbb{Q}_1(Y_t \mid Z_H^{(L)})}{\mathbb{Q}_2(Y_t)} \right].
\end{equation}
Combining these, we arrive at the lower bound:
\begin{equation}
    I(Y_t; Z_H^{(L)}) \geq 1 + \mathbb{E}_{\mathbb{P}(Y_t, Z_H^{(L)})} \left[ \log \frac{\mathbb{Q}_1(Y_t \mid Z_H^{(L)})}{\mathbb{Q}_2(Y_t)} \right] 
    - \mathbb{E}_{\mathbb{P}(Y_t)\mathbb{P}(Z_H^{(L)})} \left[ \frac{\mathbb{Q}_1(Y_t \mid Z_H^{(L)})}{\mathbb{Q}_2(Y_t)} \right],
\end{equation}
which concludes the proof.
\end{proof}

\subsection{Proof of Proposition \ref{proposition:upper_bound}}
\begin{proof}
Considering the source data $\mathcal{D}_s = (A, X)$ and the Markovian dependence of $\mathcal{M}$, we establish the Markov chain:
\begin{equation*}
    \mathcal{D}_s \to \{Z_H^{(l)}\}_{l \in S_H} \cup \{Z_A^{(l)}\}_{l \in S_A} \to Z_H^{(L)}.
\end{equation*}
By the Data Processing Inequality \citep{cover2012elements}, the upper bound of the target term is:
\begin{equation}
    I(\mathcal{D}_s; Z_H^{(L)}) \leq I\left(\mathcal{D}_s; \{Z_H^{(l)}\}_{l \in S_H} \cup \{Z_A^{(l)}\}_{l \in S_A}\right).
\end{equation}
To decompose the above term, we introduce the following history sets:
\begin{align}
    H_A^{(l)} &= \{Z_H^{(l_1)}, Z_A^{(l_2)} \mid l_1 < l, l_2 < l, l_1 \in S_H, l_2 \in S_A\} ,\nonumber \\
    H_H^{(l)} &= \{Z_H^{(l_1)}, Z_A^{(l_2)} \mid l_1 \leq l, l_2 < l, l_1 \in S_H, l_2 \in S_A\}. \nonumber
\end{align}
Applying the Chain Rule for Mutual Information \citep{cover2012elements}, the above term is expanded as:
\begin{equation}
    I\big(\mathcal{D}_s; \{Z_H^{(l)}\}_{l \in S_H} \cup \{Z_A^{(l)}\}_{l \in S_A}\big) 
    = \sum_{l \in S_A} I(\mathcal{D}_s; Z_A^{(l)} | H_A^{(l)}) + \sum_{l \in S_H} I(\mathcal{D}_s; Z_H^{(l)} | H_H^{(l)}).
\end{equation}
For each term $l \in S_A$, by the Chain Rule for Mutual Information, and considering the Markovian property of $(A, Z_X^{(l)}) \to Z_A^{(l)}$, we have:
{\small
\begin{align}
    I(\mathcal{D}_s; Z_A^{(l)} | H_A^{(l)}) &\leq I(\mathcal{D}_s, A, Z_X^{(l)}; Z_A^{(l)} | H_A^{(l)}) \nonumber \\
    &= I(A, Z_X^{(l)}; Z_A^{(l)} | H_A^{(l)}) + I(\mathcal{D}_s; Z_A^{(l)} | A, Z_X^{(l)}, H_A^{(l)}) \nonumber \\
    &\leq I(Z_A^{(l)}; A, Z_X^{(l)}).
\end{align}
}

Similarly, for $l \in S_H$, given the Markovian property of $(Z_A^{(l)}, Z_X^{(l)}) \to Z_H^{(l)}$, we obtain the following inequality:
\begin{equation}
I(\mathcal{D}_s; Z_H^{(l)} | H_H^{(l)}) \leq I(Z_H^{(l)}; Z_A^{(l)}, Z_X^{(l)}).
\end{equation}
To provide a computationally tractable objective, we introduce a variational approximation to the marginal distribution $p(y)$. Note that the mutual information can be expressed as:
\begin{equation*}
    I(X; Y) = \mathbb{E}_{p(x,y)} \left[ \log \frac{p(y|x)}{p(y)} \right].
\end{equation*}
Given an arbitrary variational distribution $q(y)$, we have:
\begin{equation*}
    \mathbb{E}_{p(x,y)} \left[ \log \frac{p(y|x)}{q(y)} \right] - I(X; Y) = \text{KL}(p(y) \| q(y)) \geq 0,
\end{equation*}
which leads to the variational upper bound: 
\begin{equation}
    I(X; Y) \leq \mathbb{E}_{p(x,y)} [ \log \frac{p(y|x)}{q(y)} ].
\end{equation}
By applying this property to each term in the summation and substituting the definitions from Eq. \ref{eq:upper_bound}, we arrive at the final inequality:
\begin{equation}
    I(\mathcal{D}_s; Z_H^{(L)}) \leq \sum_{l \in S_A} \text{AIB}^{(l)} + \sum_{l \in S_H} \text{HIB}^{(l)},
\end{equation}
which concludes the proof.
\end{proof}

\section{Computational Complexity}
We analyze the complexity of the instantiated MLGIB under an efficient implementation based on sparse message passing and stochastic neighborhood sampling. Let $n$ and $m$ denote the numbers of nodes and edges, respectively; let $l$ be the number of propagation layers; let $h$ be the hidden dimensionality; let $C$ be the number of labels; let $d$ be the label embedding dimensionality; and let $N_b$ and $E_b$ denote the numbers of sampled nodes and edges in a mini-batch subgraph.

For full-graph training, the dominant cost of one MLGIB layer consists of three parts: 
(i) label-aware neighbor scoring in AIB, which requires $O(nCd + mC)$ operations to construct label-aware node representations and compute edge-wise relevance scores; 
(ii) hierarchical information bottleneck modeling in HIB, which incurs $O(mh^2)$ time for messages computation; and 
(iii) standard  message aggregation and node update, which can be implemented in $O(mh + nh^2)$ or $O(mh^2)$. Assuming $m > n$, we use an efficient implementation with a complexity of $O(mh + nh^2)$.
Therefore, the overall time complexity of a $l$-layer MLGIB is
    $O\!\left(
l(nCd + mC + mh^2  + nh^2)
\right)$. In sparse graphs, this can be viewed as near-linear in the number of edges up to moderate feature- and label-dependent factors.
In practice, MLGIB is trained with stochastic neighborhood sampling rather than full-graph propagation. Under this setting, the per-mini-batch complexity becomes
\begin{equation}
    O\!\left(
l(N_bCd + E_bC + E_bh^2  + N_bh^2)
\right),
\end{equation}
which scales with the sampled subgraph size instead of the entire graph. This significantly improves practicality on large-scale graphs and makes the method compatible with standard mini-batch GNN training pipelines.

The space complexity mainly includes node representations $O(nh)$, pseudo-label embeddings $O(nC)$, and message embeddings $O(mh)$. Full-graph storage requires $O(nh + mh + nC)$ memory, while the mini-batch version only stores sampled node/edge states, yielding
\begin{equation}
    O(N_bh + E_bh + N_bC).
\end{equation}

Overall, MLGIB preserves the sparse-computation advantages of modern GNN frameworks, while introducing only structured overheads associated with label-aware filtering and information-theoretic message compression. This design leads to a scalable implementation suitable for both medium- and large-scale multi-label graph learning.

\section{Experiment Setup Supplement}
\label{Experiment_supplement}
\subsection{Dataset Supplement}
\label{Dataset_supplement}
To evaluate the effectiveness of the proposed framework, we conduct experiments on four real-world multi-label datasets that exhibit diverse characteristics:
\begin{itemize}
    \item \textbf{DBLP} \citep{akujuobi2019collaborative} depicts the co-authorship relation between authors. Each node represents an author, and each edge represents the existence of a co-authorship relation between two authors. The dataset comprises four distinct labels, each corresponding to a specific research domain. The feature vector for each author (node) is constructed by concatenating the textual content of the titles from all publications authored by the individual.
    \item \textbf{BlogCatalog} \citep{lakshmanan2010knowledge} is a prevalent benchmark in blogosphere analysis and network representation learning. In this network, nodes correspond to individual bloggers on the BlogCatalog platform. An edge is established between two nodes to represent a mutual friendship between the corresponding bloggers. Node labels are derived from the thematic categories of the blogs published by each user. Notably, this dataset does not provide intrinsic node attributes. To evaluate the efficacy of our proposed methodology and established baselines on node classification tasks within attribute-deficient networks, we employ randomly initialized node embeddings for the experimental validation.
    \item \textbf{PCG} \citep{zhao2023multi} is employed for protein phenotype prediction. It consists of a network containing 3,233 nodes (proteins) and 37,351 edges. Each node is represented by a 32-dimensional feature vector. The multi-label classification task involves assigning nodes to one or more of 15 phenotype categories, where a phenotype denotes an observable characteristic associated with a disease. Edges in the graph represent functional interactions between protein pairs. The correspondence linking proteins to their associated phenotypes is curated from an open-source database.
    % \item \textbf{Humloc} \citep{zhao2023multi} is designed for the prediction of human protein subcellular localization. It comprises a network of 3,106 nodes and 18,496 edges. Each node, representing a protein, is associated with one or more labels from a set of 14 possible subcellular locations. The graph structure is constructed based on protein-protein interaction data sourced from a publicly available database, where an edge is established between two protein nodes if a known interaction exists between the corresponding biological entities.
    \item \textbf{Delve} \citep{xiao2022semantic} constitutes a large-scale citation network comprising 1,229,280 nodes (academic papers) and 4,322,275 edges. Within this graph, an edge is established from one node to another to represent a citation relationship between the corresponding publications. The classification task involves assigning each paper a subset of 20 predefined research field categories (indexed from 1 to 20), which serve as node labels.
\end{itemize}
A summary of the dataset statistics is reported in Table \ref{tab:static}. We also analyze the distributions of label correlation among nodes, as shown in Fig. \ref{fig:statistic}. Except for DBLP, most nodes in the other datasets exhibit low label similarity with their neighbors, which also validates the characteristic of partial label overlap for nodes in multi-label graphs that we proposed in Fig. \ref{fig:intro}(a).
\begin{table}[htbp]
  \centering
  \caption{Dataset Statistics}
    \begin{tabular}{ccccc}
    \toprule
    Dataset & Nodes & Features & Classes & Edges \\
    \midrule
    DBLP  & 28,702 & 300   & 4     & 68,335 \\
    BlogCatalog  & 10,312 & 300   & 39    & 333,983 \\
    PCG   & 3,233  & 32    & 15    & 37,351 \\
    % Humloc & 3,106  & 32    & 14    & 18,496 \\
    Delve & 1,229,280 & 300   & 20    & 4,322,275 \\
    \bottomrule
    \end{tabular}%
  \label{tab:static}%
\end{table}%

\begin{figure}
    \centering
    \includegraphics[width=1\linewidth]{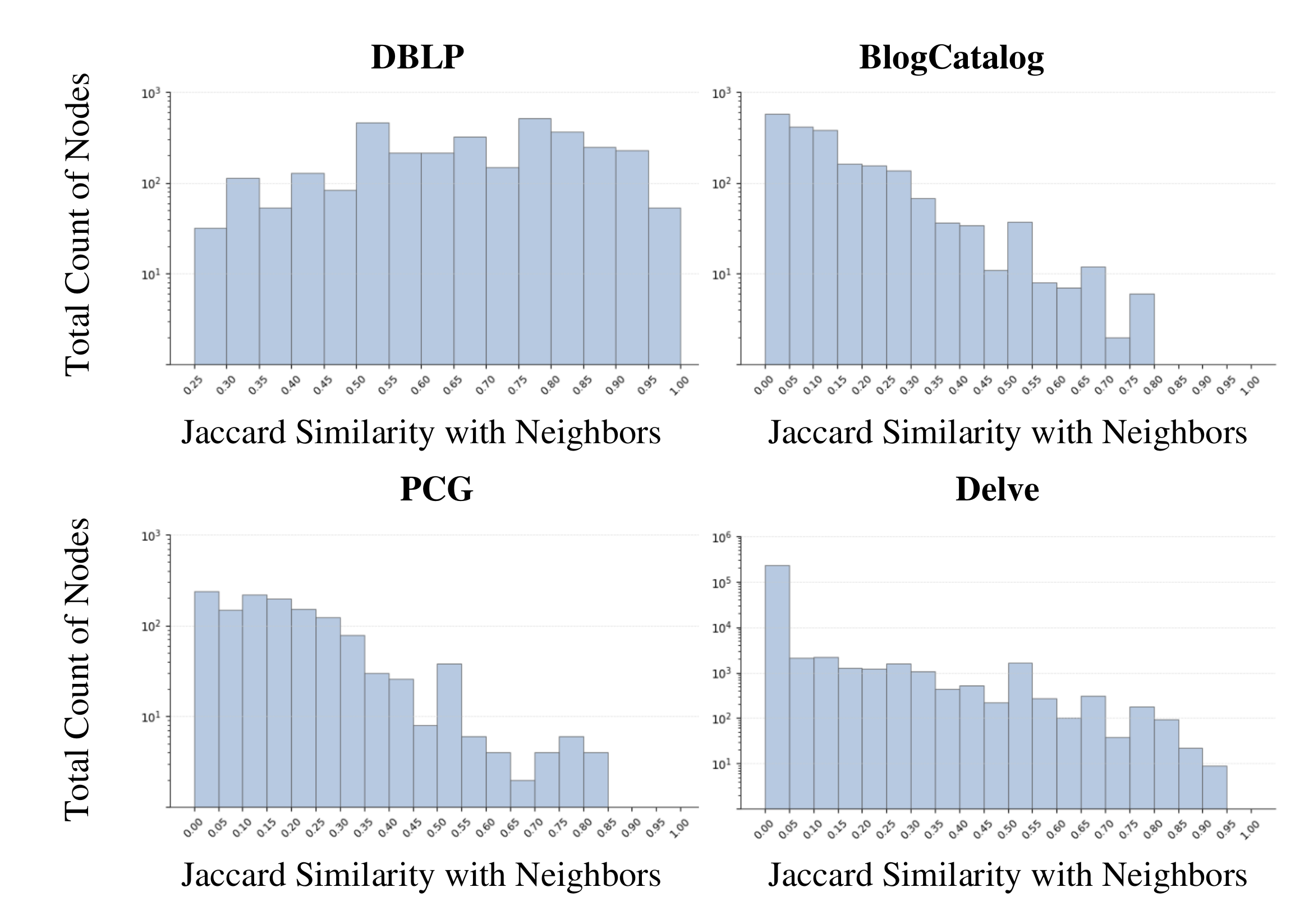}
    \caption{Distributions of label correlation among nodes.}
    \label{fig:statistic}
\end{figure}

\textbf{PubMed dataset (single-label) \citep{sen2008collective} in expressiveness analysis. } PubMed dataset is the standard node classification benchmark used in graph neural networks (GNNs). It models the citation network of biomedical literature. In this dataset, nodes represent scientific publications, and edges represent citation links. The dataset contains 19,717 nodes and 44,338 edges.

DBLP and BlogCatalog datasets are available at: https://github.com/Tianqi-py/MLGNC; PCG and Delve datasets are available at: https://github.com/YuanchenBei/CorGCN; The PubMed dataset is available via the torch\_geometric function in PyTorch. These sources are not official releases. We randomly split the data into 60\% for training, 20\% for validation, and 20\% for testing. All results are averaged over 10 independent runs with different random seeds. 

\subsection{Metric Supplement}
\label{Metric_supplement}
Following prior work on multi-label graph models \citep{gao2019semi,zhou2021multi,bei2025correlation}, to comprehensively evaluate the performance of our multi-label graph model, we employed a suite of seven established metrics: Macro-AUC, Micro-AUC, Ranking Loss, Hamming Loss, Macro-AP, Micro-AP, and LRAP. 
\begin{itemize}
    \item Macro-AUC calculates the Area Under the Receiver Operating Characteristic Curve for each label independently and then averages the results, providing a class-aware measure that treats all labels equally. It is sensitive to the performance on minority labels.
    \item Micro-AUC computes the AUC by globally aggregating all predictions and true labels across all instances and labels, effectively treating the evaluation as a single binary classification task. It is more influenced by the performance on majority labels.
    \item Ranking Loss measures the average fraction of reversely ordered label pairs, where an irrelevant label is incorrectly ranked above a relevant one. It assesses the quality of the predictive ranking.
    \item Hamming Loss quantifies the fraction of misclassified instance-label pairs (i.e., wrong labels) to the total number of labels. It is a straightforward measure of overall classification error.
    \item Macro-AP computes the average precision (area under the precision-recall curve) for each label and then takes the arithmetic mean over all labels, offering a label-centric perspective on precision-recall trade-off.
    \item Micro-AP  calculates the average precision from a globally aggregated contingency table of all predictions and true labels, providing an instance-centric perspective on overall precision and recall.
    \item LRAP (Label Ranking Average Precision) evaluates the quality of the ranked label list for each instance. For a given instance, it calculates the average precision over the set of its relevant labels, based on their ranks within the predicted ordering, and then averages this value across all instances. It directly measures how well the ranking surface relevant labels at the top.
\end{itemize}
\subsection{Compute Resources}
\label{Compute_Resource}
% All experiments were conducted on a single A6000 GPU with 50 GB memory. It should be noted, however, that some baseline methods—due to a lack of proper optimization—require far more memory than this configuration provides.
All experiments were conducted on a server running Ubuntu 22.04 LTS, equipped with an Intel Core i9-10920X CPU (3.50 GHz base, 4.80 GHz max turbo frequency, 12 cores / 24 threads), 125 GB of system memory, and an NVIDIA RTX A6000 50GB GPU. The software environment included Python 3.9.18, PyTorch 2.4.0, and CUDA 12.4.
\subsection{Training Configuration}
Our training process implements a graph information bottleneck-inspired approach for multi-label node classification. The configuration details are as follows:
\begin{itemize}
    \item \textbf{Optimizer:} Adam with a learning rate of $1 \times 10^{-3}$.
    % \item \textbf{Loss Function:} Multi-label Binary Cross Entropy with auxiliary Information Bottleneck regularizers
    \item \textbf{Neighborhood Sampling:} Stochastic neighbor sampling with a fixed fanout of 1,024 per node at each layer.
    \item \textbf{Regularization:} The loss is employed with a dataset-specific hyperparameter $\beta \in [0, 10^{-3}]$.
    \item \textbf{Gaussian Mixture kernels:} For the Gaussian mixture distribution defined in the HIB module, we fix the number of mixture kernels to $m=6$.
    \item \textbf{Reproducibility:} Deterministic execution using fixed random seeds and \texttt{CUBLAS\_\allowbreak WORKSPACE\_\allowbreak CONFIG} set to \texttt{:4096:8}.
    \item \textbf{Initialization:} Model parameters are initialized using Xavier uniform initialization to maintain stable gradient flow.
    \item \textbf{Mini-batch Training:} To handle the scale of the \texttt{Delve} dataset, we employ stochastic neighbor sampling with a fixed fanout of 1,024 for each target node per layer.
    \item \textbf{Embedding Dimension:} The dimension of the label embedding space is set to $d = 128$.
    \item \textbf{Skip-gram Negative Sampling:} The positive-to-negative sample ratio is 1:5.
    %For each positive label pair within a node, we perform strict negative sampling with $K = 5$ to contrast co-occurring labels against non-existent ones
    \item \textbf{Skip-gram Batch Configuration:} To ensure efficient gradient updates on large-scale datasets like \texttt{Delve}, we utilize a batch size of 1,024 for the label pairs.
\end{itemize}
% The source codes are released at: https://anonymous.4open.science/r/MLGIB-13776/.

\section{Related Works and Discussion}
In this section, we review the most related works, discussing the differences and connections between our work and existing methods. 
\subsection{Multi-Label Graph Learning}
As previously discussed, only a few studies have focused on Multi-Label Graph Learning. Representatively, ML-GCN \citep{gao2019semi} employs a skip‑gram training strategy to learn the three‑way relations between labels and labels, as well as between labels and nodes. Through joint optimization, it jointly derives the embeddings for both labels and nodes. This provides inspiration for the weight assignment of messages in our theoretical instantiation. LANC \citep{zhou2021multi} incorporates a label‑attention mechanism at the final layer, which inserts a label vector correlated with the features. This vector comprises a combination of the label information most relevant to the features, and is used to predict the final labels. Similar to the idea of assigning message‑passing weights according to label relevance in this study, LARN \citep{xiao2022semantic} proposes a Label‑Guided Neighbor Aggregator Module to aggregate node neighbors with different weights. Going further, CorGCN \citep{bei2025correlation} learns separate graphs for the number of labels in order to disentangle ambiguous features and structure. Information propagation and aggregation are performed on each graph, and the results are then merged for final prediction.

\textbf{Discussion.} Although the aforementioned methods leverage attention mechanisms or multi‑label correlations to achieve a certain degree of effectiveness, they lack a principled information‑theoretic foundation. Consequently, they fail to filter and purify multi‑label message propagation, and thus remain limited by the multi‑label over‑squashing problem caused by the compression of diverse label information. Our proposed MLGIB method introduces the information bottleneck principle into multi-label graph message passing, filtering and purifying the information flow, thereby alleviating multi-label over-squashing by filtering noisy information during propagation.  

\subsection{Over-Squashing in Multi-Label Graphs}
As previously discussed, the over-squashing issue becomes prominent when GNNs attempt to convolve or aggregate information from long-range nodes in a graph~\citep{DBLP:conf/iclr/0002Y21}. As the number of GNN layers increases to reach distant nodes, information from an exponentially growing receptive field is forced to be compressed into fixed-length node embeddings. Thus, conventional GNNs struggle to effectively propagate messages from distant nodes and tend to overemphasize local information.

From a Riemannian geometry perspective, over-squashing has been attributed to negatively curved edges in the graph~\citep{DBLP:conf/iclr/ToppingGC0B22}. To address this issue, the Stochastic Discrete Ricci Flow (SDRF) algorithm modifies the graph structure by adding or removing edges around negatively curved regions. Similarly, BORF~\citep{nguyen2023revisiting} further establishes the connection between over-squashing and graph geometry, proposing a spatial graph rewiring strategy to alleviate this phenomenon. More recently, this line of research has linked over-squashing to graph curvature and spectral gaps, motivating the development of spectral rewiring methods such as FOSR~\citep{karhadkar2023fosr}, which introduces a computationally efficient approach to mitigate over-squashing by directly optimizing the spectral gap of the input graph.

\textbf{Discussion. }MLGIB differs from existing literature in three aspects. (1) \textbf{Multi-Label Awareness.} Existing over-squashing alleviation methods are primarily designed for single-label graphs and do not consider that over‑squashing is aggravated by the crowding of multiple label signals in multi‑label graphs.  
In contrast, MLGIB mitigates the over‑squashing problem on multi‑label graphs by assigning label‑relevant information to nodes. (2) \textbf{Information Preservation.} Curvature- and rewiring-based approaches alleviate over-squashing by modifying the input graph structure to reduce topological bottlenecks. However, such modifications may disrupt the original relational semantics of nodes, potentially leading to information loss in multi-label graphs. Because local homogeneity is diminished in multi-label graphs, altering the graph topology tends to inject additional irrelevant label information—essentially noise.
(3) \textbf{Targeted and Task-Adaptive Design.} Graph rewiring methods typically optimize structural properties such as curvature or spectral gaps in a task-agnostic manner. In contrast, MLGIB is formulated as an end-to-end framework. All instantiated components are designed to serve the objective function, giving the model a high degree of task‑driven focus, which in turn delivers higher performance for downstream multi‑label node‑classification tasks.

\section{Limitations}
\label{sec:limitations}
The core idea of MLGIB is to filter out label information irrelevant to the target node via the information bottleneck. Its effectiveness relies, to some extent, on the learnable label co‑occurrence patterns (i.e., label correlations) present in the graph. If the labels of a node and its neighbors are almost random and exhibit very weak correlation (i.e., the “homophily” is extremely low), the model may struggle to learn effective message‑passing paths ($Z_A^{(L)}$) and relevant messages ($Z_H^{(L)}$), potentially limiting performance gains.

\section{Broader Impact} 
\label{sec:broader Impact}
Although our work is primarily fundamental, focusing on enhancing the capabilities of GNNs to process graph-structured data, the improvements introduced by MLGIB can also significantly benefit various downstream tasks. These applications include a wide range of graph-related tasks, such as optimizing information flow in social networks, enhancing pattern recognition in fraud networks, and improving performance on other complex graph-based analyses.

\newpage

\end{document}